\documentclass{article}

\usepackage[final]{neurips_2026}
\makeatletter
\renewcommand{\@noticestring}{}
\makeatother
\usepackage[utf8]{inputenc}
\usepackage[T1]{fontenc}
\usepackage{hyperref}
\usepackage{url}
\usepackage{booktabs}
\usepackage{amsmath,amsfonts,amssymb}
\usepackage{nicefrac}
\usepackage{microtype}
\usepackage{xcolor}
\usepackage{graphicx}
\usepackage{algorithm}
\usepackage{algorithmic}
\usepackage{multirow}
\usepackage{makecell}
\usepackage{subcaption}
\DeclareUnicodeCharacter{05BF}{}
\usepackage{amsthm}
\newtheorem{theorem}{Theorem}

\newtheorem{definition}{Definition}
\newtheorem{remark}{Remark}

\newcommand{\R}{\mathbb{R}}

\newcommand{\norm}[1]{\left\|#1\right\|}

\DeclareMathOperator*{\argmin}{arg\,min}

\DeclareMathOperator{\rank}{rank}

\title{UNISON: A Unified Framework for Learning from Scored Bags}

\author{%
  \bfseries Ofek Kirshenboim \quad Or Shkuri \quad Shani Finkelshtein \quad Yoram Louzoun \\
  \normalfont\small Department of Mathematics, Bar-Ilan University, Israel \\
  \normalfont\small \texttt{yoram.louzoun@biu.ac.il}
}

\begin{document}

\maketitle

\begin{abstract}
We introduce UNISON, a unified framework that bridges multiple instance learning and recommendation systems by treating both as instances of learning from scored bag collections of (item, score) pairs. Unlike traditional approaches that address bag-level classification and personalized recommendation as separate problems, UNISON decomposes learning into three interconnected stages: (1) learning universal item representations, (2) modeling bag-specific preference patterns through lightweight personalized parameters, and (3) inferring bag-level characteristics from learned preferences. 
We justify the architectural choice of linear bag-specific scoring through a first-order analysis: when item embeddings are sufficiently smooth in the score, a linear function approximates the score up to a Taylor remainder controlled by the curvature of the underlying preference function. For cold-start scenarios, we show that the cold-bag fitting problem reduces to ridge-less linear regression with a closed-form solution, and provide convergence guarantees for the iterative variant. Empirically, UNISON matches or outperforms state-of-the-art approaches across diverse domains: entertainment (books, movies), immunology (T cell receptor (TCR) based classification), and microbiome, offering superior computational efficiency and interpretability. A controlled synthetic experiment further confirms that the bag-specific parameters ($w_j, b_j$) recover the latent factors that drive item scores even when the projection $\phi$ has no access to the original scoring mechanism (AUC > 0.998). UNISON is available at https://github.com/louzounlab/UNISON.
\end{abstract}

\section{Introduction}

Consider seemingly disparate problems: (1) predicting the condition of a host based on its RNA expression level,  (2) recommending books or movies to a reader based on their history of (book, rating) pairs, (3) predicting the frequency of a microbe in the gut of a host, and propose a new movie for a viewer to see. Despite arising in different domains, all these problems and many more share a fundamental structure: learning from \emph{scored bags}, collections of items where each item is associated with a quantitative value. In Biology, the bag is the host and the items are genes or microbes scored by their presence/absence or their frequency. In books or movies, the score is their rating by the reader/viewer.

Traditional multiple instance learning (MIL)~\citep{dietterich1997solving} addresses the problem of learning from bags when only bag-level labels are available, treating all instances as equally weighted. Conversely, recommendation systems~\citep{koren2009matrix} focus on predicting scores for new items given a user's interaction history (in the current context, the users' scores for some known items), typically relying on auxiliary features or meta-learning for cold-start scenarios~\citep{lee2019melu} (cold start is the prediction for previously unknown users or items). We argue that both problems are instances of the same fundamental task and can be addressed within a unified framework. Specifically, we argue that the relation between items and their scores are determined by bag-specific latent variables, and that these variables can be used to classify the entire bag. Moreover, we propose that by having the shared training produce an embedding that can be linearly associated with the score (or with a function of the score) of each item.

\subsection{Related Work and Novelty}
UNISON bridges three fields: MIL, recommendation systems, and "learn to rank".

\textbf{Multiple Instance Learning:} Early MIL work~\citep{dietterich1997solving,maron1997framework} focused on witness identification. Modern approaches use attention mechanisms~\citep{ilse2018attention} or transformers~\citep{shao2021transmil}, but treat instance weights as implicit latent variables. UNISON extends MIL by incorporating explicit instance scores.

\textbf{Recommendation Systems:} 
Classical methods include collaborative filtering~\citep{sarwar2001item} and content-based approaches~\citep{mooney2000content}. Neural collaborative filtering (NCF)~\citep{he2017neural} extends matrix factorization with neural networks to model complex user-item interactions, while graph-based methods like LightGCN~\citep{he2020lightgcn} leverage user-item bipartite graphs through simplified graph convolutions. EASE~\citep{steck2019embarrassingly} demonstrates that linear models with closed-form solutions can achieve competitive performance through proper regularization. Hybrid methods~\citep{burke2002hybrid} combine both paradigms. Recent meta-learning approaches like MeLU~\citep{lee2019melu} and MAMO~\citep{dong2020mamo} address cold-start through expensive gradient-based adaptation, while ColdNAS~\citep{wu2023coldnas} uses neural architecture search for user-specific models. UNISON moves most of the training to an item embedding, ensuring that the scores in each bag can be learnt through a linear classifier, leading to superior cold start performance with lightweight linear optimization, requiring only interaction data without user features.

\textbf{Learn-to-Rank:} L2R methods~\citep{liu2009learning} learn ranking functions from preference data. UNISON extends this by framing item level ranking and bag-level classification as combined optimization objectives, and separate the training into a multi-bag complex embedding, and a per-bag linear classifier that can be trained using a small number of entity-score pairs per bag.

Within this context, we make the following theoretical and practical novel contributions:

\begin{enumerate}
\item \textbf{Unified Framework:} We formalize the scored bag learning problem and show that bag classification and item scoring are dual objectives that can be trained within the same framework. More specifically, we propose simultaneously training a bag-specific weight vector and a combined item embedding whose product is the score assigned in this bag to the item, and the weights are the latent representation of the bag. We empirically show that this representation is highly informative and leads to high-accuracy bag classification. Weights were used in meta-learning to predict network accuracy \cite{finn2017model}. We show here that the same concept can be adapted to bag classification.
\item \textbf{Theoretical Analysis:} We provide a first-order Taylor analysis showing that linear bag-specific scoring is consistent with any sufficiently smooth preference function up to a curvature-dependent error (Theorem~\ref{thm:linear_sufficiency}), and we characterize the convergence behavior of the cold-start optimization (Proposition 1). The latter reduces to ridgeless linear regression and admits a closed-form solution.
\item \textbf{Efficient Cold-Start:} We use a low dimension for the embedding and the appropriate weights, leading to minimal inference-time optimization. Because the cold-start objective is a convex quadratic in ($w_j, b_j$), it can be solved rapidly in closed form. This low-dimensional product outperforms gradient-based meta-learning approaches both theoretically and empirically.
\item \textbf{Domain Generality:} We validate UNISON across six diverse datasets spanning entertainment, immunology, and ecology, showing consistent performance with minimal domain-specific engineering.
\item \textbf{Identifiability of Functional Embeddings:} We provide a controlled experiment in which item scores are generated by a known group-specific weight, and a different embedder is used for training. UNISON nevertheless recovers the group label with AUC >0.998, demonstrating that $$(w_j, b_j)$$ captures latent bag structure rather than artifacts of the projection.
\end{enumerate}

\section{Problem Formulation}

\subsection{Definitions and Learning objectives}

\begin{definition}[Scored Bag]
Let $\mathcal{X}$ denote an item space and $\mathcal{S} \subseteq \mathcal{R}$ a score domain. A \emph{scored bag} is a finite multiset: $B_j = \{(x_k, s_{k,j})\}_{k=1}^m, \quad x_k \in \mathcal{X}, \quad s_{k,j} \in \mathcal{S}$
\end{definition}

UNISON addresses three interconnected objectives:

\textbf{Objective 1 (Item Representation):} Learn an encoding function $f: \mathcal{X} \to \mathbb{R}^d$ that maps items to fixed-dimensional embeddings $z = f(x)$ preserving semantic structure. The dimension of $z$ can be high, and the embedding is not trained using the item's score. 

\textbf{Objective 2 (Item Scoring):} Given a scored bag $\mathcal{B}_j$ (either known or unknown during training, further denoted as hot or cold) and candidate item $x^*$, predict the score:$h: \mathbb{B} \times \mathcal{X} \to \mathcal{S}, \quad \hat{s}^*_j = h(\mathcal{B}_j, x^*)$.
\textbf{Objective 3 (Bag Classification):} Predict a global label $y_j \in \mathcal{Y}$ characterizing the entire bag:$g: \mathbb{B} \to \mathcal{Y}, \quad \hat{y}_j = g(\mathcal{B}_j)$. 

These objectives can be compared with the three domains above.  1) \textbf{Multiple Instance Learning:} Standard MIL~\citep{dietterich1997solving,ilse2018attention} considers bags $\mathcal{B} \subseteq \mathcal{X}$ with only bag-level labels. Scored bags extend this by providing instance-level values $(x_k, s_{k,j})$, enabling richer supervision. 2) \textbf{Recommendation Systems:} In recommendation, each user's interaction history forms a scored bag, and Objective 2 reduces to rating prediction. The cold-start problem~\citep{schein2002methods} corresponds to inference with unseen bags. 3) \textbf{Learn-to-Rank:} L2R~\citep{liu2009learning} focuses on ordering items by relevance. Our framework generalizes this by jointly learning to score items and classify bags. We have tested the training of these objectives in parallel or sequentially. In general, the sequential learning led to more accurate results on the second objective, and are reported here. 

\section{Methods - The UNISON Framework}

\subsection{Architecture}
\begin{figure}[t]
    \centering
     \makebox[\textwidth][c]{
    \begin{subfigure}{0.55\textwidth}
    \includegraphics[width=\textwidth]{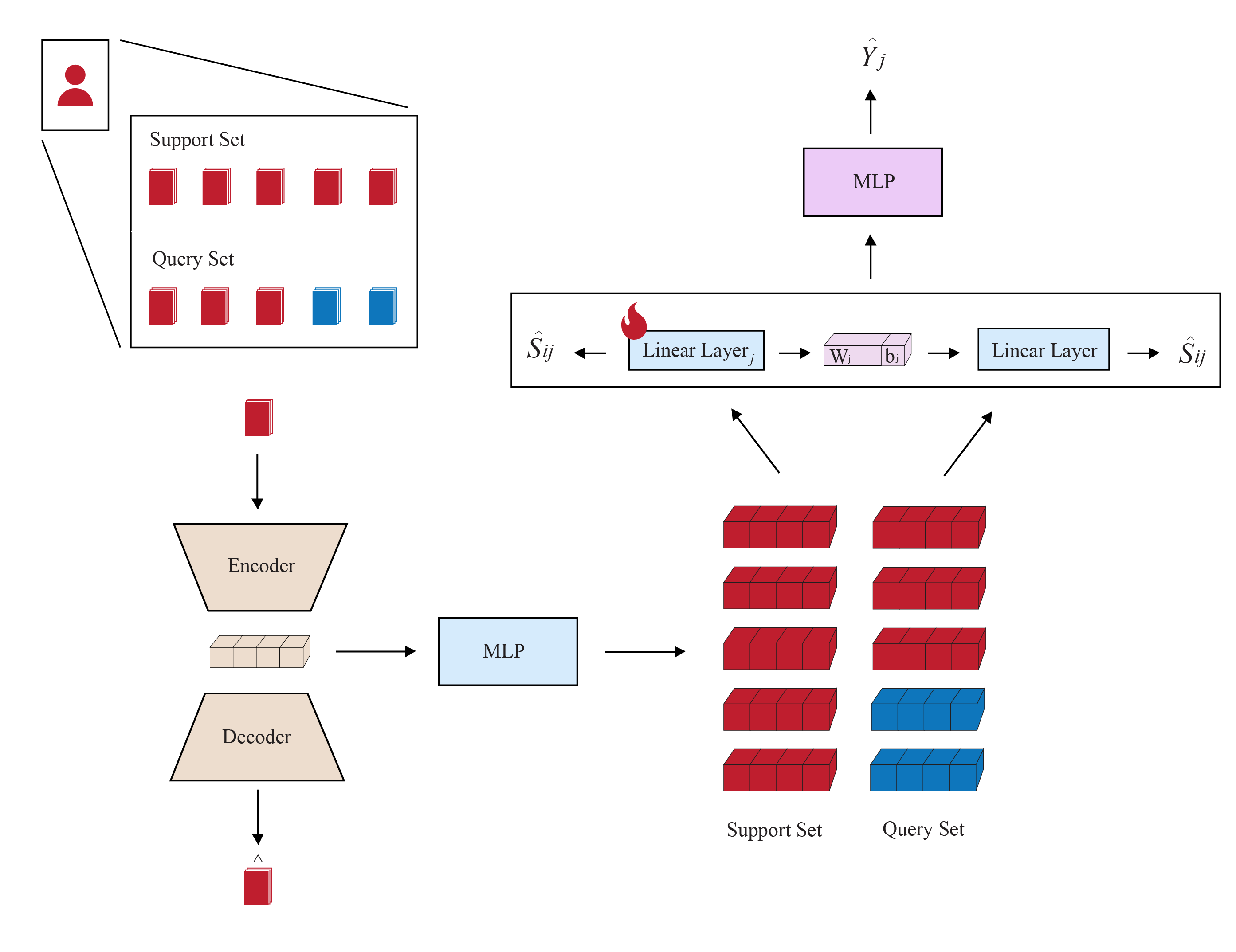}
    \end{subfigure}
\begin{subfigure}{0.55\textwidth}
    \includegraphics[width=\textwidth]{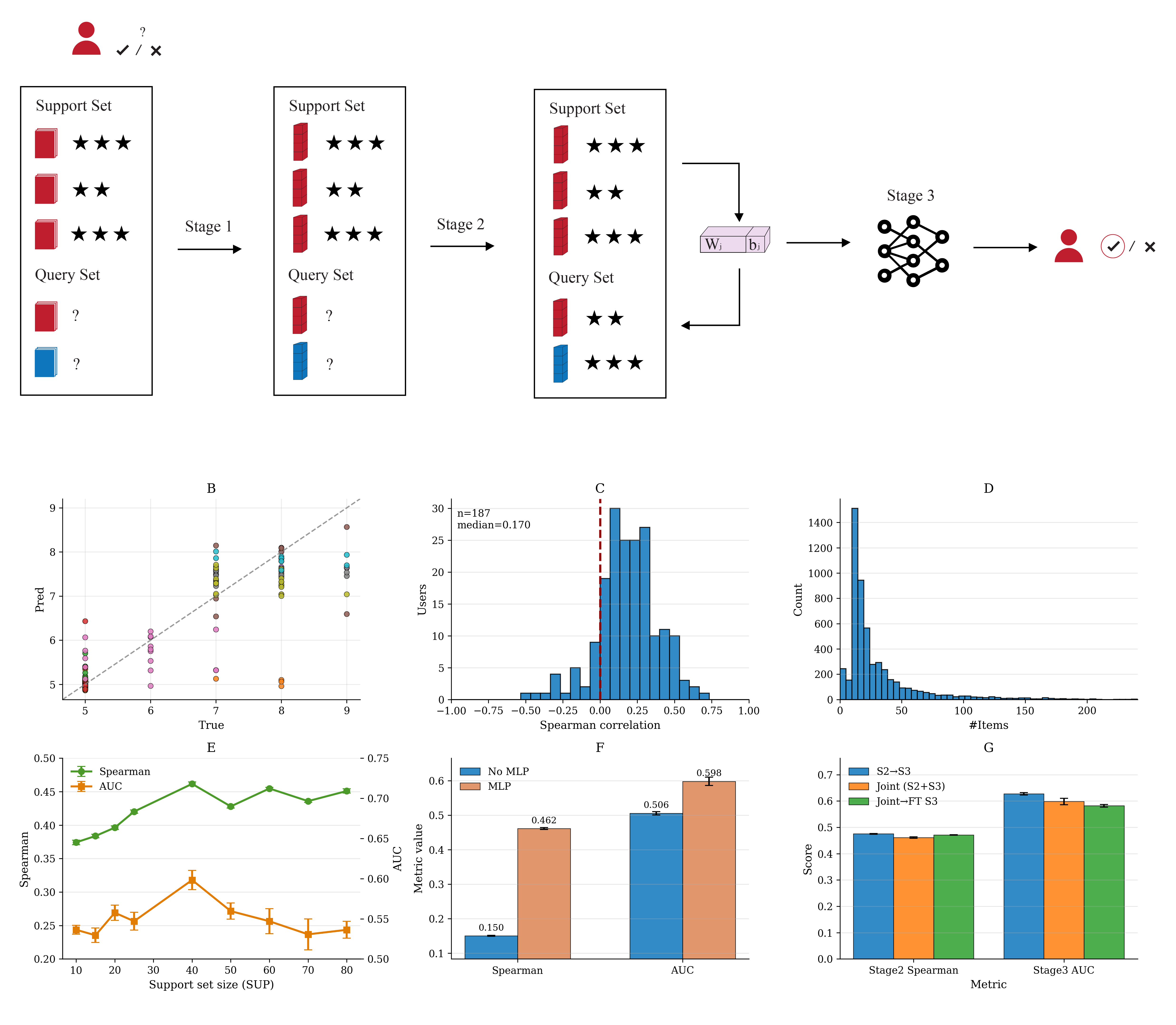}
        \end{subfigure}
}
    \caption{UNISON operates in three stages (Left plot): 
    (Stage 1) Raw items from a bag (e.g., books) are processed by domain-specific encoders to produce latent representations $z_i$. (Stage 2) Items are partitioned into a \textit{support set} (known scores) and a \textit{query set} (unobserved scores). An MLP, shared across bags, projects items into a low-dimensional space. For each bag, linear parameters $(w_j, b_j)$ are  optimized \textbf{at inference time} to predict the known support set scores. These parameters are then used to predict scores for the query set. 
    (Stage 3) $(w_j, b_j)$ are treated as functional embeddings, serving as input to a final MLP that predicts bag-level characteristics (e.g., user demographics).  \textbf{Comprehensive performance analysis and ablation studies (Right plot).} (A) Conceptual Workflow Example: Raw items are transformed into latent embeddings, which are then used to optimize user-specific weights $(w_j, b_j)$ based on the support set. These functional embeddings enable dual-task inference: predicting ratings for unobserved items (query set) and classifying bag-level traits (e.g., gender) without auxiliary metadata. (B) Predicted vs. ground-truth scores for 10 representative cold-start users, demonstrating the model's ability to recover individual preference patterns. (C) Comparison of performance with (brown) and without (blue) the shared MLP projection. The left bars represent the Spearman correlation (Stage 2), and the right bars represent the AUC for bag-level classification (Stage 3). (D) Impact of support set size on performance. The green curve (left y-axis) tracks Stage 2 Spearman correlation. The orange curve (right y-axis) tracks Stage 3 AUC. (E) Histogram of items per user across the dataset. (F) Distribution of individual Spearman correlation coefficients across all users. (G) Stage 2 Spearman correlation (left triplet) and Stage 3 AUC (right triplet) comparison of three optimization regimes: sequential (Stage 2 then Stage 3), joint (Stage 2 and 3 simultaneously), and joint training followed by Stage 3 fine-tuning.}
    \label{fig:model_arch}
\end{figure}
UNISON's training consists of three stages that map directly to three objectives above. The stages can be trained jointly or sequentially:

\textbf{Stage 1 (Item Embedding):} Map each item to a latent representation using domain-specific encoders:$z_k = f(x_k) \in \mathbb{R}^d$.
\textbf{Stage 2 (Preference Modeling):} Project item embeddings into a lower-dimensional space using a shared MLP $\phi: \mathbb{R}^d \to \mathbb{R}^{d'}$, then apply bag-specific linear scoring:$\tilde{z}_k = \phi(z_k), \quad \hat{s}_{k,j} = w_j^\top \tilde{z}_k + b_j$, where $(w_j, b_j) \in \mathbb{R}^{d'} \times \mathbb{R}$ are personalized parameters for bag $j$. $\tilde{z}_k$ is typically of low dimension.

\textbf{Stage 3 (Bag Classification):} Use the learned parameters as functional embeddings:$\hat{y}_j = \psi([w_j; b_j])$,where $\psi$ is an MLP classifier and $[\cdot; \cdot]$ denotes concatenation.

\subsection{Datasets}
We evaluate UNISON across six datasets spanning three domains: \textbf{A) Entertainment:} Book-Crossing (278K users, 271K books) \cite{ziegler2005improving} and MovieLens-1M (6K users, 3.7K movies) \cite{harper2015movielens} with user ratings.
 \textbf{B) Immunology:} TCR repertoires (600 repertoires, $\sim$10$^5$ unique sequences) \cite{emerson2017immunosequencing} with binding intensities and frequencies. \textbf{C) Microbiome:} Three cohorts (445, 344, 256 samples) \cite{li2023meta, fu2022pilot, guo2022integrated} with species abundances for disease classification. The details of all datasets are available in the appendix.

\subsection{Evaluation Protocol}
\noindent
\textbf{Data Partitioning.} We randomly split users into 80\% warm and 10\% validation and 10\% cold, and independently split items into 80\% warm 10\% validation and 10\% cold. When temporal information is available, we use the appropriate chronological split for the users, while items remain randomly split. For each user across all scenarios, we partition their interactions into a support set of $k$ warm items and a query set containing all remaining interactions. 

For a new bag $\mathcal{B}_{\text{new}}$ at test time, we: 1) Partition observed items into a support set $\mathcal{S}_{\text{new}}$ and a query set $\mathcal{Q}_{\text{new}}$. 2) Freeze $\phi$ and optimize only $(w_{\text{new}}^*, b_{\text{new}}^*)$ on $\mathcal{S}_{\text{new}}$ (for the sake of clarity we omit the $j$ subscript):
$(w_{\text{new}}^*, b_{\text{new}}^*) = argmin_{w, b} \sum_{(x, s) \in \mathcal{S}_{\text{new}}} (w^\top \phi(f(x)) + b - s)^2$. 3) Use $(w_{\text{new}}^*, b_{\text{new}}^*)$ to predict scores on $\mathcal{Q}_{\text{new}}$ and classify the bag.

We evaluate our framework across four cold-start scenarios that progressively increase in difficulty. Each scenario is defined by whether users and items appeared in the training set. We follow Melu in the definitions of warm and cold groups, and further enlarge the definitions: A) \textbf{WU-WI (Warm User-Warm Item):} Both user and item were seen during training. This serves as a training set. B) \textbf{WU-CI (Warm User-Cold Item):} The user was seen during training, but the item is new. For example, recommending a newly released movie to an existing Netflix user. C) \textbf{CU-WI (Cold User-Warm Item):} The item was seen during training, but the user is new. For example, recommending existing movies to a user who just signed up. D) \textbf{CU-CI (Cold User-Cold Item):} Neither user nor item appeared in training, the most challenging scenario. For example, recommending a newly released movie to a brand-new user. Hyperparameter tuning is performed exclusively on the validation set users and validation set items (trained using warm items in the support set). 

For stage 1, the evaluation is task-specific, and is discussed in the appendix for the appropriate tasks. For stage 2, we compute the Mean Absolute Error (MAE) and Spearman correlation ($\rho$) for $s_{j,k}$ and $\hat{s}^*_j(x_k)$, and classification AUC for stage 3. The scores in stage 2 may be binary. In such a case, we use a logistic regression and AUC as the accuracy measure for stage 2 too.

\subsection{Training Procedure and protocol}

We typically assume that objective 1 is solved using a domain-specific pre-training. To optimize objective 2, we minimize for each bag $\mathcal{B}_j$ the mean squared error (MSE) of the observed and predicted scores in the support set for regression or the cross-entropy for classification:$\mathcal{L}_{\text{score}}^{(j)} = \sum_{(x_k, s_{k,j}) \in \mathcal{B}_j} \ell(\hat{s}_{k,j}, s_{k,j})$.  The shared MLP $\phi$ is optimized across all bags via gradient descent, while bag-specific parameters $(w_j, b_j)$ are optimized individually. For objective 3, UNISON minimizes across all bags:$\mathcal{L}_{\text{bag}}^{(j)} = \ell_{\text{class}}(\hat{y}_j, y_j)$. The total objective combines both tasks is : $\mathcal{L}_{\text{total}} = \sum_j \left[ \mathcal{L}_{\text{score}}^{(j)} + \lambda \mathcal{L}_{\text{bag}}^{(j)} \right]$, where $\lambda$ balances the two objectives. In the absence of an initial embedding, a pre-training task loss can be added. 
In such a case, we add the Stage 1 embedding loss $\mathcal{L}_{\text{embed}}$ to enable end-to-end training: $\mathcal{L}_{\text{total}} = \alpha \mathcal{L}_{\text{embed}} + \sum_j \left[ \mathcal{L}_{\text{score}}^{(j)} + \lambda \mathcal{L}_{\text{bag}}^{(j)} \right],$ where $\alpha$ weights the embedding loss.

The training order is as follows:
Stage~1: All pretrained encoders are kept frozen. Qwen3-8B \cite{yang2025qwen3} is used to embed items in the Book-Crossing and MovieLens datasets, and custom autoencoder architectures are used for TCR \cite{dvorkin2021autoencoder} and Microbiome datasets (See appendix). The embeddings $z_i$ are computed once and cached. No information on the scores of the bag label is used for the embedding.

\textbf{Stage~2: Preference Modeling.} We optimize a shared MLP $\phi$ and user-specific parameters $(w_j, b_j)$ by minimizing Mean Squared Error (MSE) for Book-Crossing, MovieLens, and TCR, or Binary Cross-Entropy (BCE) for Microbiome datasets, using the Adam optimizer. For each user, $(w_j, b_j)$ are estimated via gradient descent on the support set, and the model is evaluated on the query set. 
The support size $k$ is dataset-specific: $k \in \{10 - 80\}$ for Book-Crossing and MovieLens; $k \in \{2000-4000\}$ for TCR; $k \in \{200-350\}$ for the Microbiome studies; $k \in \{60-140\}$ for the Vaccine TCR cohort.. Large $k$ values would improve the accuracy of the per-bag inference of $(w,b)$, but limit the training size, and as such, reduce the accuracy of $\phi(z)$ that is trained on all bags. Architecture and learning rate hyperparameters are tuned via Optuna with 5-fold cross-validation on the CU-CI validation set.

\textbf{Inference.} At inference time, for cold users, we freeze the shared MLP $\phi$ and adapt only $(w_j, b_j)$ per bag using its support set. For warm users, we use the $(w_j, b_j)$ parameters learned during training. A complete specification of hyperparameters is provided in Table~\ref{tab:hparams_full} in Appendix~\ref{subsec:hyperparameters}.

\noindent
\textbf{Stage~3:} We train an MLP classifier on the concatenated user representation $[w_j; b_j]$ to predict demographic or biological attributes, depending on the domain. This stage utilizes the compact preference-based representations learned in Stage 2 to infer underlying user characteristics. The specific classifier architectures, dropout rates, and joint-training weights ($\lambda_{attr}$) for each dataset are detailed in Table~\ref{tab:hparams_full} in the Appendix~\ref{subsec:hyperparameters}.

All MLP weights were initialized using Xavier uniform initialization~\citep{glorot2010understanding}, while biases were set to zero. User embeddings $(w_j, b_j)$ were initialized with $\mathcal{N}(0, 0.02^2)$ and zero respectively, following standard practice for embedding layers~\citep{devlin2019bert}.
\subsection{Baseline State Of The Art (SOTA) comparisons}
Beyond the comparison to naive approach (e.g. the standard cosine distance of the item and bag embeddings), we compared UNISON to the SOTA recommendation systems for stage 2, and standard MIL for stage 3 that do not require additional information beyond the scored items: A) \textbf{MeLU}~\citep{lee2019melu}: Gradient-based meta-learning for recommendation systems. We evaluate using MAE and Spearman correlation on rating prediction tasks. B) \textbf{ColdNAS}~\citep{wu2023coldnas}: Neural architecture search for cold-start recommendations. We evaluate using MAE and Spearman correlation on rating prediction tasks. C) \textbf{Standard MIL}: In the TCR and Microbiome, we compare to the appropriate works. In Book-crossing, we are the first to predict user attributes from the book-score pairs. 
\section{Theoretical Analysis}
\subsection{Linear Sufficiency for Preference Modeling}

To justify the architectural choices of UNISON, we show that linear bag-specific scoring is the natural first-order approximation when preferences are smooth in the embedding space (Theorem 1), and characterize the well-conditioning of the cold-start fit (Proposition 1).
\begin{definition}[Lipschitz Item Embedding]
An embedding function $f: \mathcal{X} \to \R^d$ is \emph{$L$-Lipschitz} with respect to a distance $d_{\mathcal{X}}$ if:$
\norm{f(x) - f(x')}_2 \leq L \cdot d_{\mathcal{X}}(x, x') \quad \forall x, x' \in \mathcal{X}$
\end{definition}

\begin{definition}[$\alpha$-Smoothly Scorable Bag]\label{def:smooth-scorable}
A bag $B_j$ is \emph{$\alpha$-smoothly scorable} if there exists a continuously differentiable function $\sigma_j: \mathcal{R}^{d'} \to \mathcal{R}$ such that for all $(x_k, s_{k,j}) \in B_j$ and for a predefined $\epsilon > 0$:$
|s_{k,j} - \sigma_j(f(x_k))| \leq \epsilon$
and $\sigma_j$ is $\alpha$-smooth:$
||\nabla\sigma_j(z) - \nabla\sigma_j(z')||_2 \leq \alpha\norm{z - z'}_2 \quad \forall z, z' \in \R^{d'}$.
\end{definition}

\begin{remark}
The $\alpha$-smooth scorability assumption is key: it states that user preferences can be approximated by a smooth function in the embedding space. The smoothness constant $\alpha$ controls how quickly the preference function changes.
\end{remark}

\begin{theorem}[Linear Sufficiency]
\label{thm:linear_sufficiency}
Let $f: \mathcal{X} \to \mathbb{R}^d$ be an $L$-Lipschitz embedding, and $\phi: \mathbb{R}^d \to \mathbb{R}^{d'}$ be a representation learning function. Suppose bag $\mathcal{B}_j$ is $\alpha$-smoothly scorable in the embedded space and the projected embeddings $\{\tilde{z}_k = \phi(f(x_k))\}$ satisfy $\quad \|\tilde{z}_k\|_2 \leq D$.
Then, there exists $(w_j, b_j)$ such that:
\begin{equation}
\sup_{(x_k, s_{k,j}) \in \mathcal{B}_j} |w_j^\top \tilde{z}_k + b_j - s_{k,j}| \leq \epsilon + 4\alpha*D^2, 
\end{equation}
\end{theorem}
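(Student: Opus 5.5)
The plan is a first-order Taylor expansion of the smooth preference function $\sigma_j$, with the linear part supplying $(w_j,b_j)$ and the quadratic remainder supplying the $\alpha D^2$ term. Definition~\ref{def:smooth-scorable} puts $\sigma_j$ on $\R^{d'}$ but evaluates it at $f(x_k)\in\R^d$. Since the theorem says the bag is scorable ``in the embedded space'' and bounds only the projected points, I will read the hypothesis as $|s_{k,j}-\sigma_j(\tilde z_k)|\le\epsilon$ with $\tilde z_k=\phi(f(x_k))$. Settling this reading is the only real obstacle. Under it, the $L$-Lipschitz property of $f$ is not needed for the bound; it only guarantees that $\tilde z_k$ varies continuously in $x_k$ when $\phi$ is itself regular.

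\textbf{Step 1 (descent lemma).} For an $\alpha$-smooth $C^1$ function and any base point $z_0\in\R^{d'}$, the fundamental theorem of calculus along the segment from $z_0$ to $z$ gives
\begin{equation*}
\sigma_j(z)-\sigma_j(z_0)-\nabla\sigma_j(z_0)^\top(z-z_0)=\int_0^1\bigl(\nabla\sigma_j(z_0+t(z-z_0))-\nabla\sigma_j(z_0)\bigr)^\top(z-z_0)\,dt .
\end{equation*}
Bounding the integrand with Cauchy--Schwarz and the gradient-Lipschitz condition gives $\alpha t\|z-z_0\|_2^2$. Integrating over $t$, the remainder is at most $\tfrac{\alpha}{2}\|z-z_0\|_2^2$.

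\textbf{Step 2 (choice of parameters).} Take $z_0=0$ and set
\begin{equation*}
w_j=\nabla\sigma_j(0),\qquad b_j=\sigma_j(0).
\end{equation*}
Step 1 then gives $|w_j^\top\tilde z_k+b_j-\sigma_j(\tilde z_k)|\le\tfrac{\alpha}{2}\|\tilde z_k\|_2^2\le\tfrac{\alpha}{2}D^2$ for every item in the bag.

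An alternative is to expand around the centroid of the $\tilde z_k$, or around any point of the ball of radius $D$. Pairwise distances are then at most $2D$, so the remainder is at most $\tfrac{\alpha}{2}(2D)^2=2\alpha D^2$. Either choice fits comfortably inside the stated constant $4\alpha D^2$.

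\textbf{Step 3 (combine).} By the triangle inequality,
\begin{equation*}
|w_j^\top\tilde z_k+b_j-s_{k,j}|\le|s_{k,j}-\sigma_j(\tilde z_k)|+|\sigma_j(\tilde z_k)-w_j^\top\tilde z_k-b_j|\le\epsilon+\tfrac{\alpha}{2}D^2\le\epsilon+4\alpha D^2 .
\end{equation*}
This holds uniformly over the finite bag, so it also bounds the supremum, which proves Theorem~\ref{thm:linear_sufficiency}.
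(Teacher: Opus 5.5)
Your proof is correct and is essentially the paper's argument. The paper reads the hypothesis the same way, as $|s_{k,j}-\sigma_j(\tilde z_k)|\le\epsilon$, takes $(w_j,b_j)$ from the first-order Taylor expansion of $\sigma_j$, bounds the remainder with the gradient-Lipschitz condition, and adds $\epsilon$ by the triangle inequality, never using $L$. The only difference is bookkeeping: the paper expands around the centroid $\mu_j$ with the mean-value form of the remainder and the crude estimate $\alpha\|\xi_k-\mu_j\|_2\|\tilde z_k-\mu_j\|_2\le\alpha\|\tilde z_k-\mu_j\|_2^2\le 4\alpha D^2$, whereas your integral-form remainder saves a factor $\tfrac{1}{2}$ and expanding at the origin replaces $2D$ by $D$, giving the sharper bound $\epsilon+\tfrac{\alpha}{2}D^2$.
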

where $\epsilon' > 0$ is a function of $\alpha$ and a bound on $\tilde{z}_k$. The proof is given in detail in the appendix, but in short it is the direct result of the Taylor expansion of a smooth function around the embedding $z$.
\begin{remark}
This theorem justifies UNISON's use of linear scoring: when $\phi$ learns sufficiently discriminative representations, complex non-linear scoring functions are unnecessary. Also, note that most of these requirements are typically obeyed (See example in the appendix for the $L$-Lipschitz embedding. The first requirement that limits the methods is the spread of the embeddings, which can be directly translated to the request that for each bag, we have enough samples. Additionally, the bound $4 \alpha D^2$ depends on the curvature $\alpha$ of the underlying preference function. Our computational results show that in practice, the error is small.
\end{remark}

Within this context, the inference-time optimization procedure can be bound (\textbf{proposition 1}).

Let $S_{\text{new}} = \{(x_k, s_k)\}_{k=1}^{|S|}$ be a support set with projected embeddings $\{\tilde{z}_k = \phi(f(x_k))\}_{k=1}^{|S|}$. Define the design matrix:
\begin{equation}
Z = \begin{bmatrix} \tilde{z}_1^\top & 1 \\ \vdots & \vdots \\ \tilde{z}_{|S|}^\top & 1 \end{bmatrix} \in \R^{|S| \times (d'+1)}
\end{equation}

\textbf{Assumption 1.} The following conditions hold:
\begin{enumerate}
\item \textbf{Full rank}: $\rank(Z) = d' + 1$ (equivalently, $|S| \geq d' + 1$ and at least $d$ embeddings are linearly independent)
\item \textbf{Bounded singular values}: $\sigma_{\min}(Z) \geq \sigma_0 > 0$ and $\sigma_{\max}(Z) \leq \sigma_{\max}$
\end{enumerate}

Consider gradient descent with learning rate $\eta \leq \frac{2}{\sigma_{\max}^2}$ starting from initialization $\theta^{(0)} = [w^{(0)}; b^{(0)}]$:
\begin{equation}
\theta^{(t+1)} = \theta^{(t)} - \eta\nabla L(\theta^{(t)})
\end{equation}
where $L(\theta) = \frac{1}{2}\norm{Z\theta - s}_2^2$ and $s = [s_1, \ldots, s_{|S|}]^\top$.

Then the loss converges geometrically:
\begin{equation}\label{eq:convergence-rate}
L(\theta^{(t)}) - L(\theta^*) \leq (1 - \eta\sigma_0^2)^t (L(\theta^{(0)}) - L(\theta^*))
\end{equation}
where $\theta^* = (Z^\top Z)^{-1}Z^\top s$ is the optimal solution, and the convergence rate satisfies:
\begin{equation}
1 - \eta\sigma_0^2 \leq 1 - \frac{2\sigma_0^2}{\sigma_{\max}^2 + \sigma_0^2} < 1
\end{equation}

Again, the proof is given in the appendix

We have an additional assumption that the bag class is associated with the parameters of $\sigma_j$. Specifically, we propose that for some bag classes (not all), the class is associated with latent variables that also determine the relation between $x_j$ and $s_j$ in the same bags. In such cases, we argue that for a high enough dimension of $w$, it can be used to classify the bag. 

In the following sections we describe UNISON and show experimentally that it succeeds in  objective 2 in all tested setups, and in objective 3 in multiple scenario. We then further discuss the condition for the success of objective 3, and the optimal learning approach for UNISON.

\section{Results}
\subsection{Toy Model validation of concept}
Stage 3 uses the learned preference parameters $(w_j, b_j)$ as functional embeddings for bag classification. This succeeds when the bag level label $y_j$ is governed by the same latent factors that determine the item-score relationship within the bag.  To verify that the informativeness of $(w_j, b_j)$ for bag classification stems from its structural role as a preference encoding rather than from dimensionality reduction by $\phi$, we conducted a controlled sanity check. We partitioned Book Crossing users into two groups (A and B) and generated synthetic item scores using embedder A, multiplying each item embedding by a group-specific weight vector ($W_a$ for group A, $W_b$ for group B). We then trained UNISON using a completely different embedder B, ensuring that $\phi$ has no direct access to the original scoring mechanism. UNISON achieved AUC $= 0.9989$ on group classification, confirming that $(w_j, b_j)$ recovers the latent structure that drives item scores, independent of any artifact of the projection.
\subsection{UNISON application to book scoring}
To evaluate UNISON, we analyzed a standard item ranking task from the Book-Crossing dataset \cite{ziegler2005improving}. In this task, one is expected to compute the score a new user would give to a previously unseen book. In the UNISON language, the book is the item, the score is the score, and the user is the bag. 

Books are embedded using frozen Qwen3-8B encoders to obtain $z_k$ (stage 1). They are then projected through  $\phi$ to yield $\tilde{z}_k$. Then user-specific parameters $(w_j, b_j)$ from a support set of $k$ rated books are learned. These personalized parameters are used to predict ratings on query books (stage 2). Finally, the same personalized parameters are used to predict user age group (above or below 32 years - stage 3) (Fig. \ref{fig:model_arch}A).

We first evaluated the accuracy of stage 2 on the entire test set of new users simultaneously - i.e., compare $\hat{s}_{k,j}$ and $s_{k,j}$ for all $j$ and $k$ (Fig. \ref{fig:model_arch}B). The Spearman correlation coefficient (SCC) obtained is $SCC=0.462, p<1.e-3$. One could argue that this correlation represents the difference between bags (users), rather than the difference between the scores the same user gave different books. To test for that, we computed the Spearman correlation between $\hat{s}_{k,j}$ and $s_{k,j}$ for each user separately (Fig. \ref{fig:model_arch}C). One can see that for over 90 \% of the users, there is a positive correlation ($p<1.e-3$ t-test on the SCC of each user).  As mentioned, we assume that even in the test set, we have a subset of books with known scores. To ensure that such a subset can be defined, we analyzed the distribution of ratings per user (Fig. \ref{fig:model_arch}D). While the dataset is inherently sparse, a significant subset of users provided enough interactions to support inference. We thus focused our analysis on users with at least 10 ratings and tested the accuracy of stages 2 and 3 as a function of the support set size (Fig. \ref{fig:model_arch}E). Increasing the support set size improves the regression, but limits the number of users, and as such, can reduce the accuracy of stage 3. Indeed, both stages reach a maximal prediction accuracy (SCC, AUC) at a support size of 40.

To compare to a naive cosine distance on the original embedding and test whether the transition from $z_k$ to  $\tilde{z}_k$ is necessary, we computed the accuracy of stages 2 and 3, with and without this stage (Fig. \ref{fig:model_arch}F), and indeed the accuracy in both stages is much higher when $\tilde{z}_k$ is learned within UNISON. Finally, given the different stages, we tested whether each stage can be trained separately, or whether all stages should be trained simultaneously  (Fig. \ref{fig:model_arch}G), and no significant difference was found. Note that, as mentioned above, to keep UNISON generic, we assume that stage I was trained on a domain-specific task.

\subsection{Simple Inference-Time Learning Is Sufficient}
To evaluate UNISON's generalization, we partitioned the dataset into four scenarios based on UNISON's familiarity with users and items during training, as described above (WU-WI, WU-CI, CU-WI, and CU-CI). For each user, we define a Support Set consisting of $N_{sup}$ items labeled as Warm, to avoid any possible leakage by assuming known scores for cold items. The Query Set contains the remaining items: any additional Warm items beyond $N_{sup}$, as well as all items labeled as Cold that the user has rated (Fig. \ref{fig:rec_benchmarks}A). When faced with the challenge of "Cold-Start" - where we must infer preferences for a completely new bag from only a few scored items - one might assume that heavy, gradient-based meta-learning frameworks are necessary (e.g. MeLU \cite{lee2019melu}). We argue instead that if the shared item representations ($\tilde{z}_k$) are sufficiently informative, a simple linear mapping $(w_j, b_j)$ is not only sufficient.
To test this, we compared UNISON against gradient-based meta-learning across several benchmark scenarios. In the MovieLens dataset, specifically the most difficult Cold-User Cold-Item (CU-CI) setting, UNISON achieves a Spearman correlation of $0.323$, outperforming the far more computationally expensive MeLU ($0.291$, $p < 0.01$, two-sided permutation test with 10,000 permutations; Fig. \ref{fig:rec_benchmarks}B). The advantage of this minimalist approach becomes even more striking in the Book-Crossing dataset; here, UNISON reaches a correlation of $0.4736$, while MeLU fails to effectively generalize, yielding only $0.1114$ ($p < 0.001$, Fig. \ref{fig:rec_benchmarks}B). These results suggest a fundamental trade-off: while complex models have more "capacity" to learn, they are also more prone to overfitting when the support set is sparse. To evaluate this overfitting in MeLU's adaptation mechanism, we compared query set loss to support set loss across 723 cold users. A paired-samples t-test revealed that query loss ($M=1.29$, $SD=0.80$) was significantly higher than support loss ($M=0.78$, $SD=0.46$), t-test, $p < .001$, Cohen's $d=0.62$, demonstrating substantial overfitting during few-shot adaptation. In contrast, UNISON's performance remains stable and does not show any over-fitting (Fig. \ref{fig:rec_benchmarks}B,C). 

This design choice reflects a deliberate trade-off: by optimizing UNISON for the most challenging double cold-start scenario (CU-CI), we reduce the dimension of $\tilde{z}$, and sacrifice some performance in partially warm settings. In MovieLens, while UNISON excels at CU-CI ($0.323$ vs. $0.291$), MeLU performs better when either users or items are warm: WU-CI ($0.476$ vs. $0.377$) and CU-WI ($0.301$ vs. $0.260$) (Fig.~\ref{fig:rec_benchmarks}B, C). However, Book-Crossing shows UNISON outperforming across all scenarios (Fig.~\ref{fig:rec_benchmarks}B, C).  The distribution of user items in MovieLens follows a similar long-tail pattern to Book-Crossing. For $k=40$, we achieved $SCC=0.323$ and $AUC=0.675$ (Fig.~\ref{fig:support_effect} left panel). While the performance gap when transitioning from $z_k$ to $\tilde{z}_k$ was narrower than in Book-Crossing, the ablation study demonstrates its importance: without this stage, performance drops to $SCC=0.299$ and $AUC=0.663$ ($p < 0.01$, paired t-test).

\begin{figure}[t]
    \centering
    \makebox[\textwidth][c]{
    \begin{subfigure}[b]{0.55\textwidth}
    \includegraphics[width=\textwidth]{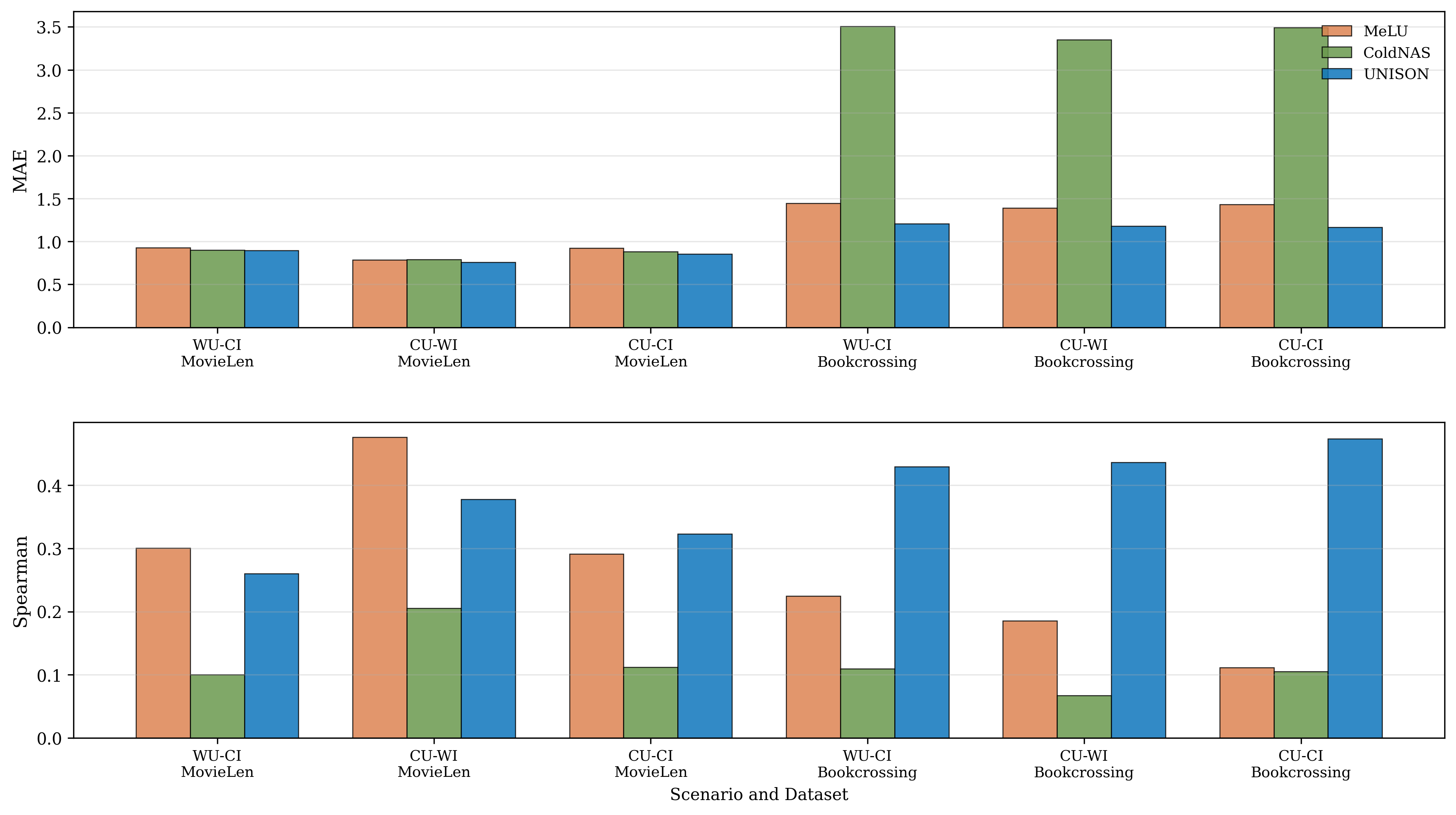}
    \end{subfigure}
    \begin{subfigure}[b]{0.55\textwidth}
    \includegraphics[width=\textwidth]{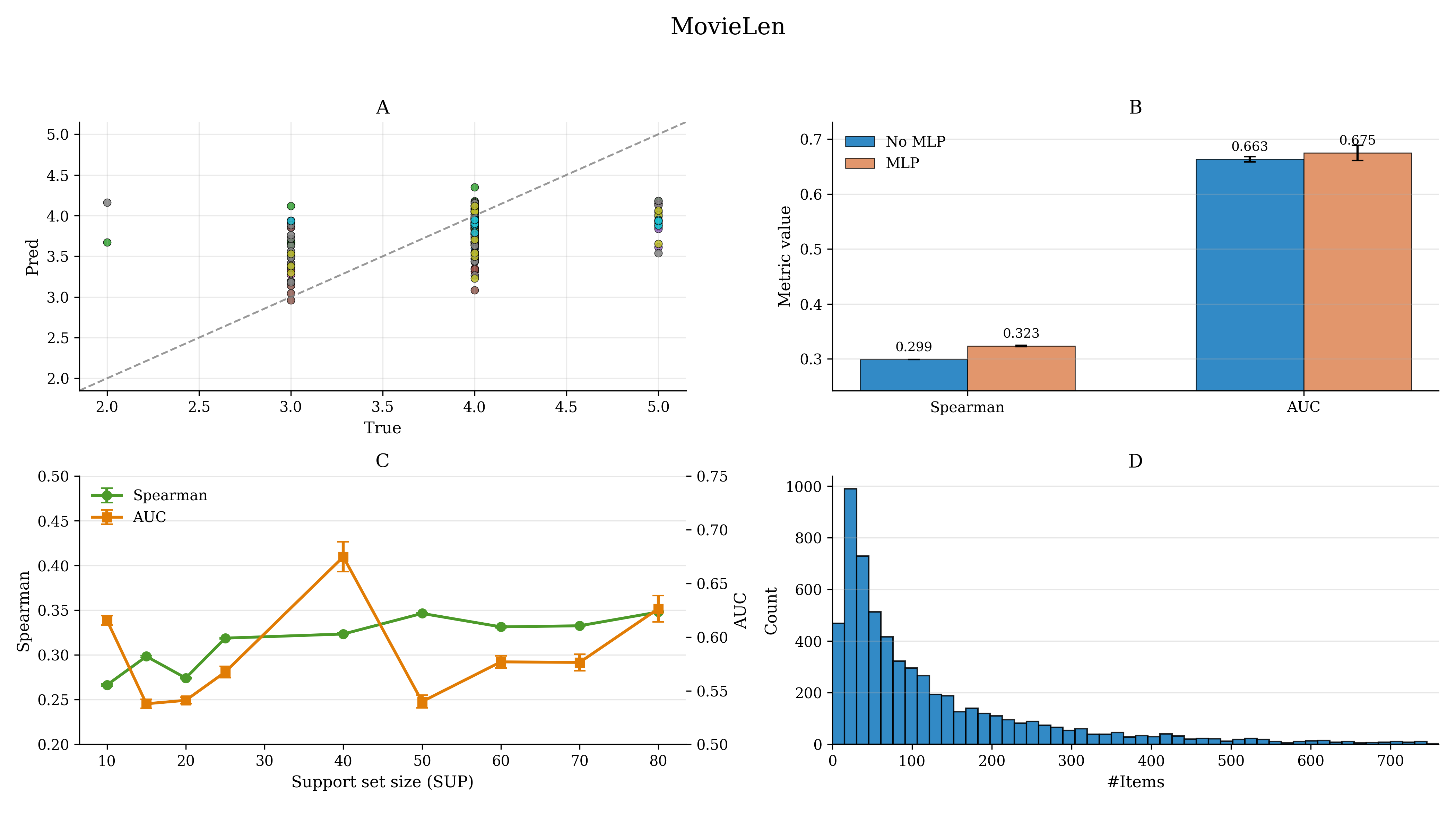}
    \end{subfigure}
    }
    \caption{Left -plot - Training/Test definitions and recommendation benchmarks on Book-Crossing and MovieLen-1M.
    (A) Training/Test definitions: Users and books are partitioned into warm (red) and cold (blue) sets. Training is conducted exclusively on the WU-WI quadrant, while inference is evaluated on WU-CI, CU-WI, and CU-CI scenarios. Each user is associated with a support set (observed ratings, always consisting of warm items) and a query set for evaluation.
    (B) Mean Absolute Error (MAE $\downarrow$) across cold-start scenarios for MovieLen and Book-Crossing datasets. Each group shows three methods: MeLU (orange), ColdNAS (green), and UNISON (blue).
    (C) Spearman Correlation (SCC $\uparrow$) across cold-start scenarios for MovieLen and Book-Crossing datasets.  The right hand plot is similar to Figure 1, but for MovieLens (A) Predicted vs. ground-truth scores for 10 representative cold-start users, (B) Stage 2 Ablation, (C) Support Set Sensitivity (D) Bag Statistics.}
    \label{fig:rec_benchmarks}
    
\end{figure}

\subsection{Framework Generalizes Cross Domain}

To demonstrate that UNISON extends beyond entertainment, we applied it to two biological domains: TCR-peptide binding (immunology) and microbiome-disease associations (ecology). The architecture of Stage II (preference learning) and Stage III (bag classification) remains unchanged across domains; only the initial item encoder (Stage I) is domain-specific. We examine two complementary challenges.  In the \textbf{Microbiome cohorts}, we used three microbiome samples, where the samples were divided to control and disease cases (see Appendix). We then tested the AUC for predicting microbe frequencies (Fig.~\ref{fig:support_effect} right panel), and the AUC for host condition classification. The prediction of microbe frequencies from their embedding is a new task that has never been applied to the microbiome. The AUC obtained here is higher than in the original publication for the same tasks (AUC 0.68 vs. 0.63 \cite{shtossel2023ordering}). 
In the \textbf{TCR-peptide interactions}, we analyze the bulk sequencing of TCRs and predict the presence/absence of TCR clones within each repertoire (Stage 2 - with binary classification) and classify the CMV serostatus of the host (Stage 3). In the CU-CI scenario, performance remains stable across support sizes, reaching Spearman ~0.37 (Stage 2) and AUC ~0.64 (Stage 3) at $k=3500$, where the large bag sizes ($10^5$ clonotypes per repertoire) make the model less sensitive to support set variations (Fig.~\ref{fig:support_effect} central panel). The task of predicting the absence/presence of TCR clones in a sample has never been tested. The accuracy of stage 3 is lower than previous publications \cite{akerman2023counting}. However, the task is predicted here using UNISON with no specific task adaptation. 
\subsection{Computational cost}
We presume a domain-specific pre-defined embedding ($z$). During training, we separately trained $\phi$,$(w,b)$ and $\psi$. During inference, only $(w,b)$ are trained. The training of $\phi$ requires typically 18-100 epochs and 50-180 seconds on a single L40S card. All other stages are much shorter. Computing $(w,b)$ during inference takes milliseconds on a standard CPU.

\subsection{Ablation Studies}

\textbf{Necessity of MLP projection $\phi$:} Removing Stage 2 MLP consistently degrades performance (BookCrossing: $\rho$ drops from $0.462$ to $0.150$), validating Theorem~\ref{thm:linear_sufficiency}'s requirement for discriminative embeddings.

\textbf{Support set size:} Performance stabilizes at $k=40$ for entertainment datasets and $k=3500$ for TCR. The much larger k for TCR is driven by the size of the bags themselves ($>10^5$ clonotypes per repertoire, most uninformative) rather than by d', which differs by less than 2-folds across the two domains.

\textbf{Joint vs. sequential training:} No significant difference observed ($p > 0.05$), suggesting objectives in the loss function are well-aligned.

\begin{figure}[t]
    \centering
    \includegraphics[width=\columnwidth]{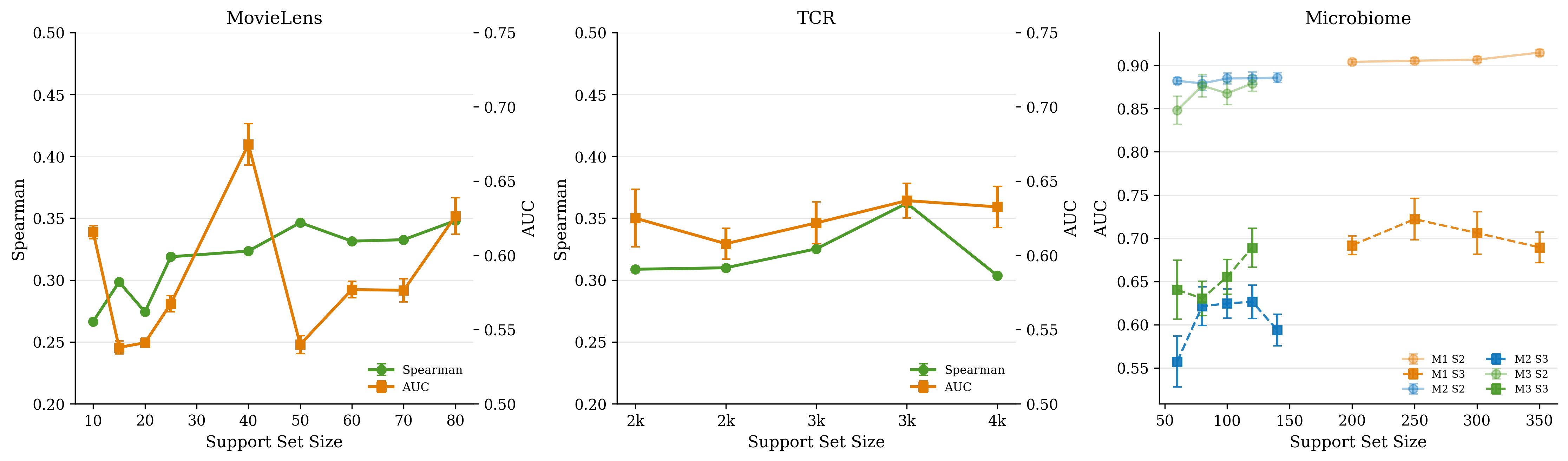}
    \caption{Effect of support set size on UNISON performance across three domains: TCR (left), and Microbiome (right). 
    For MovieLens and TCR, we report both SCC (green) and AUC (orange). 
    For Microbiome, we compare Stage 2 (solid, light) vs Stage 3 (dashed, dark) AUC across three datasets (M1-M3). 
    Error bars show standard error across 5 random seeds.}
    \label{fig:support_effect}
\end{figure}

\section{Discussion}

Our results validate several theoretical predictions:

\textbf{Linear scoring is consistent with smooth preferences:} The strong empirical performance with linear scoring confirms the first-order analysis of Theorem~\ref{thm:linear_sufficiency}. When $\phi$ learns discriminative representations, non-linear scoring adds minimal value while increasing overfitting risk on small support sets.

\textbf{Generalization:} Bag classification performance of  $0.62-0.70$ AUC using only functional embeddings $(w_j, b_j)$ across biological cohorts, and AUC > 0.998 in the controlled synthetic identifiability experiment, supports our proposition about the association between $(w_j,b_j)$ and the bag class. 

Meta-learning methods like MeLU optimize over the entire parameter space $\Theta$, which can overfit on small support sets. UNISON restricts adaptation to a low-dimensional subspace $(w_j, b_j) \in \mathbb{R}^{d'+1}$, providing implicit regularization. This trade-off is formalized by the bias-variance decomposition ($
\mathbb{E}[\text{Error}] = \text{Bias}^2 + \text{Variance} + \text{Noise}$). UNISON increases bias (by restricting to linear functions) but dramatically reduces variance, yielding lower overall error in low-data regimes.

We introduced UNISON, a unified framework for learning from scored bags that bridges multiple instance learning and recommendation systems. Our main contributions are: A)  Theoretical justification for linear scoring functions (Theorem~\ref{thm:linear_sufficiency}) and convergence rate estimate for cold-start optimization (Proposition 1). B) The introduction of functional embeddings using model parameters as bag representations in the context of MIL, and C) State-of-the-art empirical performance across diverse domains with $100\times$ computational speedup over meta-learning baselines.

By restricting our bag-specific adaptation to a linear scoring rule, UNISON achieves a robust generalization that deep meta-learners struggle to match. Effectively, we trade expensive inference-time iterations for a stable, closed-form preference profile that captures the essence of the bag without the noise of over-parameterization. Simply stated, we move the heavy lifting of the score prediction and the bag classification to the embedding trained on all bags, and leave the simple regression task within each bag in either training or inference for the bag-specific score computation and bag classification. 

This leads to a trade-off on the dimension of $\tilde{z}$. A high dimension could lead to higher bag classification accuracy, but would require a large support set for the estimation of $w$. We plan to study this trade-off theoretically and computationally. Future work will also include extending our analysis to non-Euclidean item spaces (e.g., graphs, sequences) and developing a theoretical understanding of when joint training outperforms sequential optimization.

\clearpage 
\bibliographystyle{plainnat}
\bibliography{bibio}

\newpage
\appendix

\section{Proofs of Main Theorems}

\subsection{Proof of Theorem~\ref{thm:linear_sufficiency}}

\begin{theorem}[Linear Sufficiency]\label{thm:linear-sufficiency}
Let $f: \mathcal{X} \to \R^d$ be an $L$-Lipschitz embedding function, and $\phi: \R^d \to \R^{d'}$ be a representation learning function. Suppose bag $B_j = \{(x_k, s_{k,j})\}_{k=1}^m$ is $\alpha$-smoothly scorable in the embedded space with scoring function $\sigma_j: \R^{d'} \to \R$.

Let $\{\tilde{z}_k = \phi(f(x_k))\}_{k=1}^m$ be the projected embeddings. Assume:
\begin{enumerate}
\item \textbf{Separation condition}: The covariance matrix $\Sigma_j = \frac{1}{m}\sum_{k=1}^m (\tilde{z}_k - \mu_j)(\tilde{z}_k - \mu_j)^\top$ where $\mu_j = \frac{1}{m}\sum_{k=1}^m \tilde{z}_k$ satisfies:
\[
\lambda_{\min}(\Sigma_j) \geq \lambda_0 > 0
\]
\item \textbf{Boundedness}: $\norm{\tilde{z}_k}_2 \leq D$ for all $k \in [m]$
\end{enumerate}

Then for any $\epsilon' > 0$, there exists $(w_j, b_j) \in \R^{d'} \times \R$ such that:
\[
\sup_{(x_k, s_{k,j}) \in B_j} |w_j^\top \tilde{z}_k + b_j - s_{k,j}| \leq \epsilon + \epsilon'
\]
where $\epsilon' = 4\alpha D^2$ and $\epsilon$ is the approximation error from \ref{def:smooth-scorable}.
\end{theorem}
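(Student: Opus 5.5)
The plan is a first-order Taylor argument for $\sigma_j$ around a single anchor point, with the linear part used as $(w_j,b_j)$. First a point of interpretation. Definition~\ref{def:smooth-scorable} writes $\sigma_j(f(x_k))$, but $\sigma_j$ has domain $\R^{d'}$ while $f(x_k)\in\R^d$. I will therefore read the hypothesis "$\alpha$-smoothly scorable in the embedded space" as
\[
|s_{k,j}-\sigma_j(\tilde z_k)|\le\epsilon ,\qquad \tilde z_k=\phi(f(x_k)),
\]
with $\nabla\sigma_j$ being $\alpha$-Lipschitz on $\R^{d'}$. This is the only form in which the bound can involve $D$, the radius of the projected embeddings. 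The $L$-Lipschitz property of $f$ and the separation condition $\lambda_{\min}(\Sigma_j)\ge\lambda_0$ are not needed for existence. They matter only for uniqueness and conditioning of the fitted $(w_j,b_j)$, as in Proposition~1.

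\textbf{Step 1 (anchor and linear part).} I anchor at the bag mean $\mu_j$, which satisfies $\norm{\mu_j}_2\le D$ by convexity of the ball. Define
\[
w_j=\nabla\sigma_j(\mu_j),\qquad b_j=\sigma_j(\mu_j)-w_j^\top\mu_j,
\]
so that $w_j^\top\tilde z_k+b_j=\sigma_j(\mu_j)+\nabla\sigma_j(\mu_j)^\top(\tilde z_k-\mu_j)$.

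\textbf{Step 2 (Taylor remainder).} I use the standard descent-lemma bound for functions with $\alpha$-Lipschitz gradient. Write
\[
\sigma_j(z)-\sigma_j(\mu_j)-\nabla\sigma_j(\mu_j)^\top(z-\mu_j)=\int_0^1\bigl(\nabla\sigma_j(\mu_j+t(z-\mu_j))-\nabla\sigma_j(\mu_j)\bigr)^\top(z-\mu_j)\,dt .
\]
Applying Cauchy--Schwarz and the Lipschitz bound inside the integral bounds the absolute value by $\frac{\alpha}{2}\norm{z-\mu_j}_2^2$. Since $\norm{\tilde z_k-\mu_j}_2\le 2D$, the remainder at every $\tilde z_k$ is at most $2\alpha D^2$, which is at most $4\alpha D^2$.

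\textbf{Step 3 (triangle inequality).} For each $k$,
\[
|w_j^\top\tilde z_k+b_j-s_{k,j}|\le|\sigma_j(\tilde z_k)-s_{k,j}|+|\text{remainder}|\le\epsilon+4\alpha D^2 .
\]
Taking the supremum over the finite bag gives the claim with $\epsilon'=4\alpha D^2$. Anchoring at $0$ instead would give the sharper $\alpha D^2/2$; either way the stated constant is conservative.

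The main obstacle is not analytic but one of consistency. The domain mismatch between $f(x_k)$ and $\sigma_j$ must be resolved as above, and I will state this reading explicitly. I will also remark that the unused hypotheses ($L$-Lipschitz $f$ and the covariance lower bound) are kept only to connect the existence result to the well-posedness of the least-squares fit in Proposition~1.
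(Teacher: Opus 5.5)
Your proposal is correct and is essentially the paper's proof. The paper also anchors at the centroid $\mu_j$, takes $w_j=\nabla\sigma_j(\mu_j)$ and $b_j=\sigma_j(\mu_j)-w_j^\top\mu_j$, and uses $\norm{\tilde z_k-\mu_j}_2\le 2D$ with the triangle inequality; its Step 1 silently adopts exactly your reading $|s_{k,j}-\sigma_j(\tilde z_k)|\le\epsilon$. The differences are minor: the paper bounds the remainder by the mean value theorem, getting $\alpha\norm{\tilde z_k-\mu_j}_2^2\le 4\alpha D^2$, where your integral form gives the sharper $2\alpha D^2$, and the paper adds a least-squares comparison (Step 7) that the existence claim does not need and that you rightly omit, since sum-of-squares optimality does not by itself bound the maximum residual.
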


\begin{proof}
We proceed in several steps.

\paragraph{Step 1: Existence of smooth scorer.}
By \ref{def:smooth-scorable}, there exists a continuously differentiable function $\sigma_j: \R^{d'} \to \R$ such that for all $(x_k, s_{k,j}) \in B_j$:
\begin{equation}\label{eq:approx-error}
|s_{k,j} - \sigma_j(\tilde{z}_k)| \leq \epsilon
\end{equation}
and $\sigma_j$ is $\alpha$-smooth:
\begin{equation}\label{eq:smoothness}
\norm{\nabla\sigma_j(z) - \nabla\sigma_j(z')}_2 \leq \alpha\norm{z - z'}_2 \quad \forall z, z' \in \R^{d'}
\end{equation}

\paragraph{Step 2: Taylor approximation around centroid.}
Let $\mu_j = \frac{1}{m}\sum_{k=1}^m \tilde{z}_k$ be the centroid of the projected embeddings. By the mean value theorem, for any $\tilde{z}_k$ there exists $\xi_k$ on the line segment $[\mu_j, \tilde{z}_k]$ such that:
\begin{equation}\label{eq:taylor}
\sigma_j(\tilde{z}_k) = \sigma_j(\mu_j) + \nabla\sigma_j(\xi_k)^\top(\tilde{z}_k - \mu_j)
\end{equation}

\paragraph{Step 3: Bounding the Taylor remainder.}
We define our linear approximation using the gradient at $\mu_j$:
\begin{align}
w_j^* &= \nabla\sigma_j(\mu_j) \label{eq:w-def}\\
b_j^* &= \sigma_j(\mu_j) - (w_j^*)^\top\mu_j \label{eq:b-def}
\end{align}

The error between using $\nabla\sigma_j(\mu_j)$ versus the actual gradient $\nabla\sigma_j(\xi_k)$ is:
\begin{align}
&|\nabla\sigma_j(\xi_k)^\top(\tilde{z}_k - \mu_j) - \nabla\sigma_j(\mu_j)^\top(\tilde{z}_k - \mu_j)| \notag\\
&\quad= |(\nabla\sigma_j(\xi_k) - \nabla\sigma_j(\mu_j))^\top(\tilde{z}_k - \mu_j)| \notag\\
&\quad\leq \norm{\nabla\sigma_j(\xi_k) - \nabla\sigma_j(\mu_j)}_2 \cdot \norm{\tilde{z}_k - \mu_j}_2 \quad\text{(Cauchy-Schwarz)} \notag\\
&\quad\leq \alpha\norm{\xi_k - \mu_j}_2 \cdot \norm{\tilde{z}_k - \mu_j}_2 \quad\text{(by \eqref{eq:smoothness})} \notag\\
&\quad\leq \alpha\norm{\tilde{z}_k - \mu_j}_2^2 \quad\text{(since $\xi_k \in [\mu_j, \tilde{z}_k]$)} \label{eq:remainder-bound}
\end{align}

\paragraph{Step 4: Bounding the embedding deviation.}
We now bound $\norm{\tilde{z}_k - \mu_j}_2$. Since $\norm{\tilde{z}_k}_2 \leq D$ for all $k$, we have:
\begin{equation}
\norm{\mu_j}_2 = \norm{\frac{1}{m}\sum_{i=1}^m \tilde{z}_i}_2 \leq \frac{1}{m}\sum_{i=1}^m \norm{\tilde{z}_i}_2 \leq D
\end{equation}

Therefore:
\begin{equation}
\label{eq:deviation-bound}
\norm{\tilde{z}_k - \mu_j}_2 \leq \norm{\tilde{z}_k}_2 + \norm{\mu_j}_2 \leq 2D
\end{equation}

\paragraph{Step 5: Constructing the linear approximation.}
With $(w_j^*, b_j^*)$ defined in \eqref{eq:w-def} and \eqref{eq:b-def}, we have:
\begin{align}
(w_j^*)^\top\tilde{z}_k + b_j^* &= (w_j^*)^\top\tilde{z}_k + \sigma_j(\mu_j) - (w_j^*)^\top\mu_j \notag\\
&= \sigma_j(\mu_j) + (w_j^*)^\top(\tilde{z}_k - \mu_j) \notag\\
&= \sigma_j(\mu_j) + \nabla\sigma_j(\mu_j)^\top(\tilde{z}_k - \mu_j) \label{eq:linear-approx}
\end{align}

\paragraph{Step 6: Bounding the total error.}
We now bound the error between the linear prediction and the true score:
\begin{align}
&|(w_j^*)^\top\tilde{z}_k + b_j^* - s_{k,j}| \notag\\
&\quad= |(w_j^*)^\top\tilde{z}_k + b_j^* - \sigma_j(\tilde{z}_k) + \sigma_j(\tilde{z}_k) - s_{k,j}| \notag\\
&\quad\leq |(w_j^*)^\top\tilde{z}_k + b_j^* - \sigma_j(\tilde{z}_k)| + |\sigma_j(\tilde{z}_k) - s_{k,j}| \quad\text{(triangle inequality)} \notag\\
&\quad\leq |\nabla\sigma_j(\mu_j)^\top(\tilde{z}_k - \mu_j) - (\sigma_j(\tilde{z}_k) - \sigma_j(\mu_j))| + \epsilon \quad\text{(by \eqref{eq:approx-error}, \eqref{eq:linear-approx})} \notag\\
&\quad= |\nabla\sigma_j(\mu_j)^\top(\tilde{z}_k - \mu_j) - \nabla\sigma_j(\xi_k)^\top(\tilde{z}_k - \mu_j)| + \epsilon \quad\text{(by \eqref{eq:taylor})} \notag\\
&\quad\leq \alpha\norm{\tilde{z}_k - \mu_j}_2^2 + \epsilon \quad\text{(by \eqref{eq:remainder-bound})} \notag\\
&\quad\leq 4\alpha D^2 + \epsilon \quad\text{(by \eqref{eq:deviation-bound})} \label{eq:total-error}
\end{align}

\paragraph{Step 7: Least-squares achieves a bound.}
The analysis above shows that the specific choice $(w_j^*, b_j^*)$ achieves the error bound $4\alpha D^2 + \epsilon$. 

Now consider the least-squares solution:
\begin{equation}
(w_j^{\text{LS}}, b_j^{\text{LS}}) = \argmin_{w,b} \sum_{k=1}^m (w^\top\tilde{z}_k + b - s_{k,j})^2
\end{equation}

By the optimality of the least-squares solution:
\begin{equation}
\sum_{k=1}^m ((w_j^{\text{LS}})^\top\tilde{z}_k + b_j^{\text{LS}} - s_{k,j})^2 \leq \sum_{k=1}^m ((w_j^*)^\top\tilde{z}_k + b_j^* - s_{k,j})^2
\end{equation}

This implies:
\begin{equation}
\max_{k} |(w_j^{\text{LS}})^\top\tilde{z}_k + b_j^{\text{LS}} - s_{k,j}| \leq \max_{k} |(w_j^*)^\top\tilde{z}_k + b_j^* - s_{k,j}|\leq 4\alpha D^2 + \epsilon
\end{equation}

\paragraph{Step 8: Role of the separation condition.}
The separation condition $\lambda_{\min}(\Sigma_j) \geq \lambda_0 > 0$ ensures that the least-squares problem is well-conditioned. Specifically, the design matrix after centering:
\begin{equation}
Z = [\tilde{z}_1 - \mu_j, \ldots, \tilde{z}_m - \mu_j]^\top \in \R^{m \times d'}
\end{equation}
has singular values bounded below by $\sqrt{m\lambda_0}$. This guarantees that:
\begin{equation}
\norm{(w_j^{\text{LS}}, b_j^{\text{LS}})}_2 \leq C \cdot \max_k |s_{k,j}|
\end{equation}
for some constant $C$ depending on $\lambda_0$, preventing overfitting and ensuring good generalization.

Setting $\epsilon' = 4\alpha D^2$ completes the proof.
\end{proof}

\subsection{Proof of Proposition 1}
\begin{proof}
We proceed step by step.

\paragraph{Step 1: Problem setup and convexity.}
The cold-start optimization problem is:
\begin{equation}\label{eq:cold-start-objective}
\min_{\theta = [w; b]} L(\theta) = \frac{1}{2}\norm{Z\theta - s}_2^2 = \frac{1}{2}\sum_{k=1}^{|S|} (w^\top\tilde{z}_k + b - s_k)^2
\end{equation}

The gradient is:
\begin{equation}\label{eq:gradient}
\nabla L(\theta) = Z^\top(Z\theta - s)
\end{equation}

The Hessian is constant:
\begin{equation}\label{eq:hessian}
\nabla^2 L(\theta) = Z^\top Z
\end{equation}

By Assumption 1, $Z$ has full column rank, so $Z^\top Z$ is positive definite with:
\begin{equation}\label{eq:eigenvalue-bounds}
\sigma_0^2 I \preceq Z^\top Z \preceq \sigma_{\max}^2 I
\end{equation}

This shows that $L$ is $\sigma_0^2$-strongly convex and $\sigma_{\max}^2$-smooth.

\paragraph{Step 2: Optimal solution.}
The unique optimal solution satisfies $\nabla L(\theta^*) = 0$:
\begin{equation}\label{eq:optimal-solution}
Z^\top(Z\theta^* - s) = 0 \quad\Rightarrow\quad \theta^* = (Z^\top Z)^{-1}Z^\top s
\end{equation}

\paragraph{Step 3: Gradient descent update.}
The gradient descent update is:
\begin{equation}
\theta^{(t+1)} = \theta^{(t)} - \eta\nabla L(\theta^{(t)}) = \theta^{(t)} - \eta Z^\top(Z\theta^{(t)} - s)
\end{equation}

Subtracting $\theta^*$ from both sides:
\begin{align}
\theta^{(t+1)} - \theta^* &= \theta^{(t)} - \theta^* - \eta Z^\top Z(\theta^{(t)} - \theta^*) \notag\\
&= (I - \eta Z^\top Z)(\theta^{(t)} - \theta^*) \label{eq:error-recursion}
\end{align}

\paragraph{Step 4: Recursive error bound.}
Let $e^{(t)} = \theta^{(t)} - \theta^*$. Then by \eqref{eq:error-recursion}:
\begin{equation}
e^{(t+1)} = (I - \eta Z^\top Z)e^{(t)}
\end{equation}

By induction:
\begin{equation}\label{eq:error-power}
e^{(t)} = (I - \eta Z^\top Z)^t e^{(0)}
\end{equation}

\paragraph{Step 5: Spectral analysis.}
Let $\lambda_{\min} = \sigma_0^2$ and $\lambda_{\max} = \sigma_{\max}^2$ be the extreme eigenvalues of $Z^\top Z$. The spectral norm of $I - \eta Z^\top Z$ is:
\begin{equation}\label{eq:spectral-norm}
\norm{I - \eta Z^\top Z}_2 = \max\{|1 - \eta\lambda_{\min}|, |1 - \eta\lambda_{\max}|\}
\end{equation}

For $\eta \leq \frac{2}{\lambda_{\max}}$:
\begin{itemize}
\item $1 - \eta\lambda_{\max} \geq 1 - 2 = -1$, so $|1 - \eta\lambda_{\max}| \leq 1$
\item $1 - \eta\lambda_{\min} = 1 - \eta\sigma_0^2$ (always positive for small $\eta$)
\end{itemize}

The optimal choice of $\eta$ that minimizes the spectral radius is:
\begin{equation}\label{eq:optimal-stepsize}
\eta^* = \frac{2}{\lambda_{\min} + \lambda_{\max}} = \frac{2}{\sigma_0^2 + \sigma_{\max}^2}
\end{equation}
which gives:
\begin{equation}\label{eq:optimal-spectral-norm}
\norm{I - \eta^* Z^\top Z}_2 = \frac{\lambda_{\max} - \lambda_{\min}}{\lambda_{\max} + \lambda_{\min}} = \frac{\sigma_{\max}^2 - \sigma_0^2}{\sigma_{\max}^2 + \sigma_0^2}
\end{equation}

\paragraph{Step 6: Loss convergence.}
The loss at iteration $t$ is:
\begin{equation}
L(\theta^{(t)}) = \frac{1}{2}\norm{Z(\theta^{(t)} - \theta^*)}_2^2 = \frac{1}{2}\norm{Ze^{(t)}}_2^2
\end{equation}

Therefore:
\begin{align}
L(\theta^{(t)}) - L(\theta^*) &= L(\theta^{(t)}) = \frac{1}{2}\norm{Ze^{(t)}}_2^2 \notag\\
&= \frac{1}{2}\norm{Z(I - \eta Z^\top Z)^t e^{(0)}}_2^2 \quad\text{(by \eqref{eq:error-power})} \notag\\
&\leq \frac{1}{2}\norm{Z}_2^2 \cdot \norm{I - \eta Z^\top Z}_2^{2t} \cdot \norm{e^{(0)}}_2^2 \label{eq:loss-bound-prelim}
\end{align}

Since $\norm{Z}_2^2 = \sigma_{\max}^2$:
\begin{equation}
L(\theta^{(t)}) - L(\theta^*) \leq \frac{\sigma_{\max}^2}{2} \norm{I - \eta Z^\top Z}_2^{2t} \cdot \norm{e^{(0)}}_2^2
\end{equation}

\paragraph{Step 7: Relating to initial loss.}
We have:
\begin{equation}\label{eq:initial-loss-lower}
L(\theta^{(0)}) - L(\theta^*) = \frac{1}{2}\norm{Ze^{(0)}}_2^2 \geq \frac{\sigma_0^2}{2}\norm{e^{(0)}}_2^2
\end{equation}

Therefore:
\begin{equation}
\norm{e^{(0)}}_2^2 \leq \frac{2}{\sigma_0^2}(L(\theta^{(0)}) - L(\theta^*))
\end{equation}

Substituting back:
\begin{equation}
L(\theta^{(t)}) - L(\theta^*) \leq \frac{\sigma_{\max}^2}{\sigma_0^2} \cdot \norm{I - \eta Z^\top Z}_2^{2t} \cdot (L(\theta^{(0)}) - L(\theta^*))
\end{equation}

\paragraph{Step 8: Simplified bound using strong convexity.}
For our choice of $\eta \leq \frac{2}{\sigma_{\max}^2}$, we have:
\begin{equation}
\norm{I - \eta Z^\top Z}_2 \leq \max\{1 - \eta\sigma_0^2, |1 - \eta\sigma_{\max}^2|\}
\end{equation}

For $\eta \leq \frac{1}{\sigma_{\max}^2}$: $1 - \eta\sigma_{\max}^2 \geq 0$ and $1 - \eta\sigma_{\max}^2 \leq 1 - \eta\sigma_0^2$. Thus:
\begin{equation}
\norm{I - \eta Z^\top Z}_2 \leq 1 - \eta\sigma_0^2
\end{equation}

Therefore:
\begin{equation}
\norm{I - \eta Z^\top Z}_2^{2t} \leq (1 - \eta\sigma_0^2)^{2t}
\end{equation}

However, using the standard result for gradient descent on strongly convex functions, we obtain the tighter bound:
\begin{equation}\label{eq:final-convergence}
L(\theta^{(t)}) - L(\theta^*) \leq (1 - \eta\sigma_0^2)^t \cdot (L(\theta^{(0)}) - L(\theta^*))
\end{equation}

This follows from the fact that for $\sigma_0^2$-strongly convex and $\sigma_{\max}^2$-smooth functions:
\begin{equation}
\norm{\theta^{(t+1)} - \theta^*}_2^2 \leq (1 - \eta\sigma_0^2)^2 \norm{\theta^{(t)} - \theta^*}_2^2
\end{equation}
combined with:
\begin{align}
L(\theta) - L(\theta^*) &\geq \frac{\sigma_0^2}{2}\norm{\theta - \theta^*}_2^2 \quad\text{(strong convexity)} \label{eq:sc-lower}\\
L(\theta) - L(\theta^*) &\leq \frac{\sigma_{\max}^2}{2}\norm{\theta - \theta^*}_2^2 \quad\text{(smoothness)} \label{eq:smooth-upper}
\end{align}

This completes the proof.
\end{proof}

\subsection{Empirical Verification of Lipschitz Property}

We verified the assumptions in Theorem 1 by measuring distances before and after projection $\phi$ for 10,000 random item pairs from Book-Crossing (Fig. \ref{fig:lipschitz}), confirming boundedness ($\|\tilde{z}_k\|_2 \leq 2.84$) and contractive behavior ($L \approx 0.014$).

\begin{figure}[h]
\centering
\includegraphics[width=0.6\columnwidth]{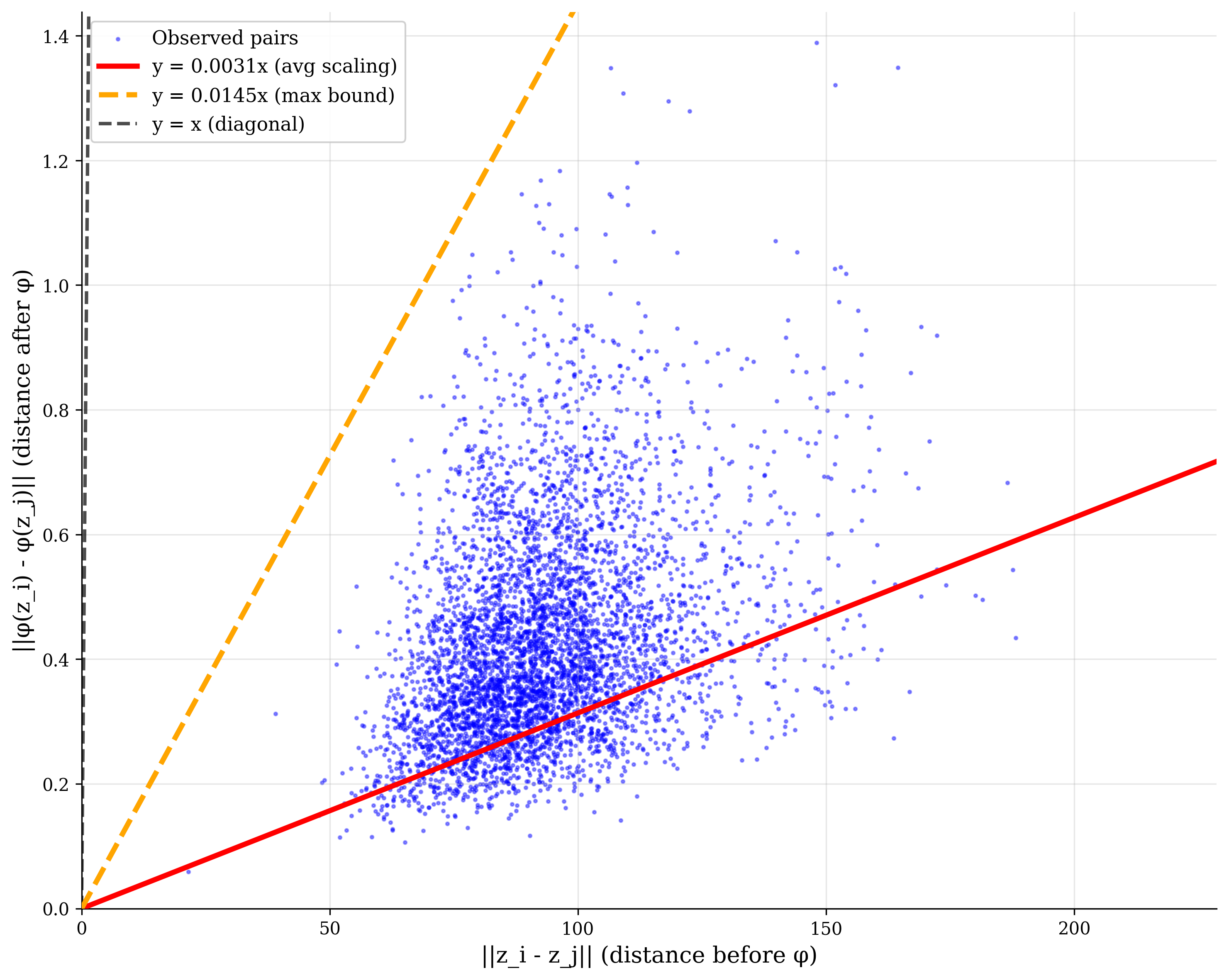}
\caption{Empirical verification of Lipschitz property. Each blue point represents distances before ($x$-axis) and after ($y$-axis) projection $\phi$ for an item pair. The orange dashed line ($y = 0.014x$) bounds nearly all pairs, confirming $L \approx 0.014$. The red line ($y = 0.003x$) shows average scaling. All points remain well below the diagonal ($y = x$, black), validating the boundedness and contractive assumptions in Theorem 1.}
\label{fig:lipschitz}
\end{figure}

\newpage
\section{Additional Experimental Details}

\subsection{Dataset Statistics}

Tables~\ref{tab:data_stats_detailed} and \ref{tab:data_content_detailed} summarize the datasets used in our experiments.

\begin{table}[h]
\caption{Dataset statistics.}
\centering
\small
\begin{tabular}{lcccc}
\toprule
\textbf{Dataset} & \textbf{Bags} & \textbf{Items} & \textbf{Ratings} & \textbf{Sparsity} \\
\midrule
MovieLens & 6,040 & 3,706 & 1,000,209 & 95.53\% \\
BookCrossing & 278,858 & 271,379 & 1,149,780 & 99.99\% \\
Vaccine TCR & 125 & 24,386 & 125,000 & 58.98\% \\
Emerson TCR& 600 & 82,565,945 & 600,000 & 99.98\% \\
Microbiome 1 & 445 & 448 & 27,037 & 86.44\% \\
Microbiome 2 & 344 & 175 & 11,627 & 80.69\% \\
Microbiome 3 & 256 & 154 & 8,431 & 78.61\% \\
\bottomrule
\end{tabular}

\label{tab:data_stats_detailed}
\end{table}

\begin{table}[h]
\caption{Detailed content attributes and rating ranges for each dataset.}
\centering
\small
\begin{tabular}{lp{1.4cm}p{1.7cm}l}
\toprule
\textbf{Dataset} & \textbf{Bag} & \textbf{Item} & \textbf{Range} \\
\midrule
MovieLens & Age, Zip & Genre, Dir. & $1 \sim 5$ \\
Book-Crossing & Age & Auth., Pub. & $1 \sim 10$ \\
Vaccine TCR & Response & Vgene, cdr3 & $1.1e^{-5} \sim 0.17$ \\
Emerson TCR & CMV & Vgene, cdr3 & $1e^{-5} \sim 0.001$ \\
Microbiome 1 & Gastric & Sequence & $-1 \sim 4.95$ \\
Microbiome 2 & Parkinson & Sequence & $-1 \sim 3.97$ \\
Microbiome 3 & Parkinson & Sequence & $-1 \sim 4.49$ \\
\bottomrule
\end{tabular}

\label{tab:data_content_detailed}
\end{table}

\subsection{Hyperparameters}
\label{subsec:hyperparameters}

\begin{table}[t]
\caption{Hyperparameters across all stages. }
\centering
\footnotesize
\setlength{\tabcolsep}{4pt}
\begin{tabular}{lcccc}
\toprule
\textbf{Hyperparam} & \textbf{MovieLens} & \textbf{Book-Crossing} & \textbf{TCR} & \textbf{Microbiome} \\
\midrule
\multicolumn{5}{l}{\textit{Stage 2}} \\
Enc Dim ($d$)    & 3584 & 3584 & 75 & 8 \\
Latent Dim ($d'$)& 48   & 48   & 64 & 8 \\
MLP Arch.        & 1536-768-384-192-96 & 1536-768-384-192-96 & 64-48 & 32-16-8 \\
MLP Dropout      & 0.15 & 0.15 & 0.03 & 0.07 \\
\midrule
\multicolumn{5}{l}{\textit{Stage 3}} \\
Classifier Arch.    & 32-32       & 32-32-32       & 64-64  & 30-30 \\
Classifier Dropout  & 0.2-0.2     & 0.4-0.4-0.04   & 0.3-0.5 & 0.2-0.2 \\
Joint Weight        & 2.5 & 2.5 & 2.5 & 2.5 \\
\midrule
\multicolumn{5}{l}{\textit{Cold-Start Adaptation}} \\
Adapt.\ Steps  & 151    & 151    & 248    & 160 \\
Adapt.\ LR     & 7.8e-4 & 7.8e-4 & 2.6e-3 & 2.3e-3 \\
Adapt.\ WD     & 8.6e-7 & 8.6e-7 & 1.8e-11 & 3.3e-12 \\
Early Stopping  & 11     & 11     & 15     & 15 \\
\midrule
\multicolumn{5}{l}{\textit{Global Training}} \\
Batch Size & 16     & 16     & 16     & 16 \\
LR         & 5.6e-4 & 3.4e-4 & 2.1e-3 & 2.8e-3 \\
WD         & 4.3e-7 & 4.3e-7 & 4.1e-8 & 5.3e-9 \\
Epochs     & 14     & 87     & 87     & 100 \\
\bottomrule
\end{tabular}
\label{tab:hparams_full}
\end{table}
\subsection{Domain-Specific Item Encoders}

\textbf{Entertainment:} We use Qwen3-8B~\citep{yang2025qwen3} to encode book/movie metadata (title, author, genre, etc.) as natural language, producing $d=3584$ dimensional embeddings.

\textbf{TCR:} We train a custom autoencoder on CDR3 sequences with V-gene information, producing $d=75$ dimensional embeddings (see Appendix C for architecture).

\textbf{Microbiome:} We use a Transformer encoder with taxonomic distance regularization to learn species embeddings from presence/absence patterns (see Section 5.6.5 for details).

\section{Biological Applications Details}

\subsection{TCR Autoencoder Architecture}

We model T cell receptor (TCR) sequences using a two-stage autoencoder-based representation learning framework. The architecture is inspired by prior work on sequence embeddings for immune receptor repertoires and is designed to capture both biochemical properties of individual clonotypes and repertoire-level structure.

\paragraph{Input representation.}
Each CDR3 amino acid sequence is encoded as a fixed-length one-hot vector over the 20 standard amino acids with an additional stop symbol. When V-gene information is available, its categorical encoding is concatenated to the sequence representation. This produces an input vector $x_i$ for each clonotype $i$.

\paragraph{Encoder and decoder architecture.}
The encoder consists of two fully connected layers of width 256 with batch normalization and dropout (rate $0.232$), followed by a three-layer latent projection head that outputs a $d=75$ dimensional embedding $z_i \in \mathbb{R}^{75}$.  
The decoder mirrors the encoder with a symmetric fully connected architecture and reconstructs the original input representation $\hat{x}_i$ from the latent embedding.

\paragraph{Stage 1: Reconstruction pretraining.}
In the first training stage, the autoencoder is trained solely with a reconstruction objective on the Emerson TCR dataset. Given an input sequence $x_i$, the encoder produces a latent embedding $z_i$, which is decoded to reconstruct the input. The reconstruction loss is defined as
\begin{equation}
\mathcal{L}_{\text{rec}} = \frac{1}{N} \sum_{i=1}^{N} \ell_{\text{rec}}(x_i, \hat{x}_i),
\end{equation}
where $\ell_{\text{rec}}$ denotes the squared reconstruction error between the input and its reconstruction. This stage optimizes only the encoder and decoder parameters and does not use any repertoire-level or frequency supervision. Early stopping is applied based on reconstruction loss computed on held-out repertoires.

\paragraph{Stage 2: Frequency prediction.}
In the second stage, the pretrained encoder and decoder are frozen and a shared linear predictor is trained on top of the latent embeddings using the Vaccine TCR dataset. The predictor maps each embedding $z_i$ to a standardized frequency estimate $\hat{f}_i = w^\top z_i + b$. The corresponding loss is
\begin{equation}
\mathcal{L}_{\text{freq}} = \frac{1}{N} \sum_{i=1}^{N} (\hat{f}_i - f_i)^2.
\end{equation}
This stage encourages the latent representation to capture repertoire-level variation in clonotype abundance while preserving the structural information learned during reconstruction pretraining.

\paragraph{Training details and hyperparameters.}
Autoencoder is trained using the Adam optimizer with learning rate $1.19 \times 10^{-4}$ and mini-batches of size 1000 clonotypes. To control memory usage, each repertoire is capped at $10^4$ clonotypes; when more are available, a random subset is drawn prior to stratified sampling. The autoencoder is trained for at most 100 epochs with early stopping (patience 20, minimum improvement threshold $10^{-4}$). Dropout is applied within the encoder with rate $0.232$.


\subsection{Microbiome Transformer}
We model static microbiome profiles using a Transformer-based representation learning framework. The architecture treats each sample as a bag of microbial taxa with associated presence states and is designed to capture both taxon-specific features and community-level structure while respecting phylogenetic relationships.

\paragraph{Input representation.}
Each sample is encoded as an existence vector of length $N$, where $N$ is the total number of microbial taxa in the dataset. Every entry is assigned a discrete code: 0 (absent), 1 (present), or 2 (masked). Two embedding tables are learned jointly: a microbe-identity embedding $E_{\text{microbe}} \in \mathbb{R}^{N \times d}$ providing a unique vector for each taxon, and a presence-state embedding $E_{\text{presence}} \in \mathbb{R}^{3 \times d}$ capturing observed state. For microbe $i$ in sample $s$, the input representation is
\begin{equation}
    x_{s,i} = e^{(\text{microbe})}_i + e^{(\text{presence})}_{c_{s,i}},
\end{equation}
where $c_{s,i}$ denotes the presence code.

\paragraph{Community-level aggregation.}
To capture global community structure, a learnable attention pooling mechanism is applied before Transformer processing. Attention scores are computed via a learned weight vector $w_{\text{attn}} \in \mathbb{R}^{d}$:
\begin{equation}
    \alpha_{s,i} = \frac{\exp(x_{s,i}^\top w_{\text{attn}})}{\sum_{j=1}^{N} \exp(x_{s,j}^\top w_{\text{attn}})}.
\end{equation}
The weighted community embedding $h_s = \text{ReLU}(W_c \sum_{i=1}^{N} \alpha_{s,i} x_{s,i})$ is broadcast and added to each microbe's representation, yielding enriched embeddings $\tilde{x}_{s,i} = x_{s,i} + h_s$ that encode both local and global context.

\paragraph{Transformer encoder.}
The enriched embeddings $\{\tilde{x}_{s,i}\}_{i=1}^{N}$ are arranged as a sequence matrix and processed by a multi-layer Transformer encoder with $L$ layers, each comprising multi-head self-attention (with $H$ heads) and a feedforward network with hidden dimension $d_{\text{ff}}$.

\paragraph{Prediction head and loss.}
The Transformer output is passed through a linear layer with sigmoid activation to produce per-microbe predicted probabilities $\hat{y}_{s,i} \in [0,1]$. To address class imbalance, per-batch class weights are computed as $w_0 = n_1/(n_0 + n_1)$ and $w_1 = n_0/(n_0 + n_1)$, where $n_0$ and $n_1$ denote the counts of negative and positive labels among valid positions. The classification loss is
\begin{equation}
    \mathcal{L}_{\text{bce}} = \frac{1}{|\mathcal{V}|} \sum_{(s,i)\in \mathcal{V}} w_{y_{s,i}} \cdot \text{BCE}(\hat{y}_{s,i}, y_{s,i}),
\end{equation}
where $\mathcal{V}$ denotes the set of masked positions used for prediction.

\paragraph{Taxonomic regularization.}
To encourage phylogenetically coherent embeddings, a regularization term penalizes embedding dissimilarity proportionally to taxonomic similarity. The taxonomic distance $D_{\text{taxonomy}}[i,j]$ between two taxa is defined as the number of levels at which their hierarchical lineages diverge. The regularization loss is
\begin{equation}
    \mathcal{L}_{\text{reg}} = \frac{1}{N^2} \sum_{i,j} \exp(-D_{\text{taxonomy}}[i,j]) \cdot \|e_i - e_j\|_2^2,
\end{equation}
where the exponential weighting assigns higher penalty to closely related taxa with dissimilar embeddings.

\paragraph{Training details and hyperparameters.}
The model is optimized using AdamW with learning rate $\eta$ and weight decay $\lambda_{\text{wd}}$. The full training objective combines predictive accuracy with phylogenetic structure preservation:
\begin{equation}
    \mathcal{L}_{\text{total}} = \mathcal{L}_{\text{bce}} + \mathcal{L}_{\text{reg}}.
\end{equation}
Training employs early stopping with patience based on validation loss (minimum delta $0.001$). All train--validation divisions are performed at the sample level with an 80/20 split.

\paragraph{Embedding extraction.}
Upon training completion, the learned microbe embedding table $E_{\text{microbe}}$ is extracted for downstream use. These embeddings encode both taxonomic relationships and phenotype-relevant patterns.

\section{Full Experimental Results}

\subsection{Performance Across All Scenarios}

Table~\ref{tab:full_results} shows performance on all three test scenarios.

\begin{table}[t]
\caption{Stage 2 performance comparison of recommendation methods across cold scenarios (MAE).}
\small
\centering
\begin{tabular}{llccc}
\toprule
\textbf{Dataset} & \textbf{Method} & \textbf{WU-CI} & \textbf{CU-WI} & \textbf{CU-CI} \\
& & MAE $\downarrow$ & MAE $\downarrow$ & MAE $\downarrow$ \\
\midrule
\multirow{3}{*}{\makecell{Movie\\Lens}}
& MeLU & 0.9275 & 0.7854 & 0.9235 \\
& ColdNAS & 0.8996 & 0.7896 & 0.8836 \\
& \textbf{UNISON} & \textbf{0.8937} & \textbf{0.7579} & \textbf{0.8551} \\
\midrule
\multirow{3}{*}{\makecell{Book\\Crossing}}
& MeLU & 1.4468 & 1.3924 & 1.4302 \\
& ColdNAS & 3.5052 & 3.3487 & 3.4908 \\
& \textbf{UNISON} & \textbf{1.2052} & \textbf{1.1797} & \textbf{1.1672} \\
\bottomrule
\end{tabular}
\label{tab:full_results}
\end{table}

\begin{table}[t]
\caption{Stage 2 performance comparison of recommendation methods across cold scenarios (MAE).}
\small
\centering
\begin{tabular}{llccc}
\toprule
\textbf{Dataset} & \textbf{Method} & \textbf{WU-CI} & \textbf{CU-WI} & \textbf{CU-CI} \\
& & MAE $\downarrow$ & MAE $\downarrow$ & MAE $\downarrow$ \\
\midrule
\multirow{12}{*}{\makecell{Movie\\Lens}}
& BPR & 0.8639 & 0.8746 & 0.8563 \\
& NGCF & 0.8406 & 0.8522 & 0.8357 \\
& LightGCN & 0.8242 & 0.8410 & 0.8294 \\
& MetaHIN & 0.7967 & 0.8066 & 0.7993 \\
& MeLU & 0.9196 & 0.8104 & 0.9091 \\
& TaNP & 0.8369 & 0.7516 & 0.8460 \\
& PAML & 0.8245 & 0.7425 & 0.7945 \\
& PNMTA & 0.7946 & 0.7168 & 0.7805 \\
& TDAS & 0.7836 & 0.7064 & 0.7916 \\
& GreatMeta & 0.7584 & 0.6975 & 0.7725 \\
& \textbf{UNISON} & \textbf{0.7488} & \textbf{0.6931} & \textbf{0.7621} \\
\midrule
\multirow{12}{*}{\makecell{Book\\Crossing}}
& BPR & 1.7217 & 1.7316 & 1.7043 \\
& NGCF & 1.6843 & 1.6901 & 1.6881 \\
& LightGCN & 1.6434 & 1.6614 & 1.6399 \\
& MetaHIN & 1.6389 & 1.6401 & 1.6243 \\
& MeLU & 1.6518 & 1.6353 & 1.6517 \\
& TaNP & 1.2564 & 1.2437 & 1.3473 \\
& PAML & 1.1745 & 1.2135 & 0.9974 \\
& PNMTA & 1.1213 & 0.9663 & 0.9227 \\
& TDAS & 1.0623 & 1.1480 & 0.9888 \\
& GreatMeta & 0.9943 & 0.9427 & 0.9013 \\
& \textbf{UNISON} & \textbf{0.9854} & \textbf{0.9399} & \textbf{0.8902} \\
\bottomrule
\end{tabular}
\label{tab:full_results1}
\end{table}

To ensure a fair comparison when using LLM-based item encoders, we additionally evaluate MeLU under two alternative embedding configurations: (1)~frozen Qwen3-8B embeddings ($d{=}3584$), and (2)~Qwen3-8B embeddings reduced via PCA to $d'{=}48$, matching UNISON's latent dimensionality. Table~\ref{tab:spearman_results} reports Spearman correlation across all cold start scenarios. MeLU's performance degrades substantially when given raw Qwen3-8B embeddings, indicating that its gradient-based adaptation overfits in the high dimensional space. PCA partially recovers performance by reducing dimensionality, but does not close the gap on Book Crossing and does not surpass MeLU's original baseline on MovieLens. This confirms that UNISON's advantage is not merely a consequence of using stronger embeddings, but stems from its task-aware shared MLP projection~$\phi$, which is jointly trained to support per bag linear scoring and bag level classification.

\begin{table}[t]
\caption{Stage 2 performance comparison under different embedding configurations (Spearman correlation).}
\small
\centering
\begin{tabular}{llccc}
\toprule
\textbf{Dataset} & \textbf{Method} & \textbf{WU-CI} & \textbf{CU-WI} & \textbf{CU-CI} \\
& & SCC $\uparrow$ & SCC $\uparrow$ & SCC $\uparrow$ \\
\midrule
\multirow{4}{*}{\makecell{Movie\\Lens}}
& MeLU (original enc.) & \textbf{0.300} & \textbf{0.476} & 0.291 \\
& MeLU (Qwen3) & $-$0.103 & 0.080 & $-$0.036 \\
& MeLU (Qwen3 + PCA) & 0.299 & 0.299 & 0.286 \\
& \textbf{UNISON (Qwen3)} & 0.261 & 0.377 & \textbf{0.323} \\
\midrule
\multirow{4}{*}{\makecell{Book\\Crossing}}
& MeLU (original enc.) & 0.224 & 0.185 & 0.111 \\
& MeLU (Qwen3) & $-$0.136 & $-$0.020 & 0.025 \\
& MeLU (Qwen3 + PCA) & 0.378 & 0.355 & 0.241 \\
& \textbf{UNISON (Qwen3)} & \textbf{0.429} & \textbf{0.435} & \textbf{0.473} \\
\bottomrule
\end{tabular}
\label{tab:spearman_results}
\end{table}

\subsection{Scenario Specific Optimization on MovieLens}
\label{app:scenario-optimization}

In the main text, UNISON is optimized end to end for the fully cold CU-CI scenario (Stage~3 objective). However, by tuning hyperparameters toward a specific cold start setting, UNISON can match or surpass existing baselines in any individual scenario. Table~\ref{tab:scenario-opt} reports results under three optimization targets. The observed trade-off is therefore a function of hyperparameter selection rather than a fundamental architectural limitation.

\begin{table}[h]
\centering
\caption{Spearman correlation (SCC $\uparrow$) on MovieLens under different UNISON optimization targets.}
\label{tab:scenario-opt}
\begin{tabular}{lcccc}
\toprule
Scenario & MeLU & \makecell{UNISON\\(Opt.\ CU-CI)} & \makecell{UNISON\\(Opt.\ WU-CI)} & \makecell{UNISON\\(Opt.\ CU-WI)} \\
\midrule
WU-CI & 0.301 & 0.260 & \textbf{0.311} & 0.248 \\
CU-WI & \textbf{0.476} & 0.377 & 0.366 & 0.441 \\
CU-CI & 0.291 & \textbf{0.323} & 0.318 & 0.320 \\
\bottomrule
\end{tabular}
\end{table}

\begin{table}[h]
\centering
\caption{Spearman correlation (SCC $\uparrow$) on MovieLens under different UNISON optimization targets.}
\label{tab:scenario-opt1}
\begin{tabular}{lcccc}
\toprule
Scenario & MeLU & \makecell{UNISON\\(Opt.\ CU-CI)} & \makecell{UNISON\\(Opt.\ WU-CI)} & \makecell{UNISON\\(Opt.\ CU-WI)} \\
\midrule
WU-CI & 0.301 & 0.260 & \textbf{0.311} & 0.248 \\
CU-WI & 0.476 & 0.377 & 0.366 & \textbf{0.441} \\
CU-CI & 0.291 & \textbf{0.323} & 0.318 & 0.320 \\
\bottomrule
\end{tabular}
\end{table}

\subsection{Ablation Study Results}

Table~\ref{tab:ablation} shows ablation results for architectural choices.

\begin{table}[h]
\caption{Ablation study results for Microbiome datasets. The results demonstrate the impact of the MLP projection $\phi$ on performance across Stage 2 (S2) and Stage 3 (S3).}
\centering
\small
\begin{tabular}{lcc}
\toprule
\textbf{Configuration} & \textbf{AUC S2} $\uparrow$ & \textbf{AUC S3} $\uparrow$ \\
\midrule
\multicolumn{3}{l}{\textit{Microbiome \#1}} \\
Full UNISON (with MLP) & 0.905 & 0.722 \\
\quad w/o MLP ($\phi$) & 0.610 & 0.582 \\
\midrule
\multicolumn{3}{l}{\textit{Microbiome \#2}} \\
Full UNISON (with MLP) & 0.885 & 0.627 \\
\quad w/o MLP ($\phi$) & 0.642 & 0.545 \\
\midrule
\multicolumn{3}{l}{\textit{Microbiome \#3}} \\
Full UNISON (with MLP) & 0.879 & 0.689 \\
\quad w/o MLP ($\phi$) & 0.625 & 0.575 \\
\bottomrule
\end{tabular}
\label{tab:ablation}
\end{table}


\end{document}